\providecommand{\doarxiv}{true}
\newcommand{\arxiv}[1]{#1}
\newcommand{\notarxiv}[1]{}
\newcommand{\arxiv}[1]{}
\newcommand{\notarxiv}[1]{#1}
\newcommand{\arxivalt}[2]{\ifthenelse{\boolean{isarxiv}}{#1}{#2}}
\newcommand{\arxivaltr}[2]{\ifthenelse{\boolean{isarxiv}}{#2}{#1}}
\newcommand{\narxiv}[1]{\notarxiv{#1}}
\newcommand{\truncated}{curve-normalized}
\newcommand{\lovasz}{Lov\'asz}
\newtheorem{lemma}[theorem]{Lemma}
\newtheorem{corollary}[theorem]{Corollary}
\newcommand{\curvf}[1]{\ensuremath{\kappa_{#1}}}
\newenvironment{proof}{\paragraph{Proof:}}{\hfill$\square$}
\begin{document}

\title{Robust Submodular Minimization with Applications to Cooperative Modeling}

\author{Rishabh Iyer \institute{University of Texas at Dallas,
USA, email: rishabh.iyer@utdallas.edu} }

\maketitle
\bibliographystyle{ecai}

\begin{abstract}
Robust Optimization is becoming increasingly important in machine learning applications. This paper studies the problem of robust submodular minimization subject to combinatorial constraints. Constrained Submodular Minimization arises in several applications such as co-operative cuts in image segmentation, co-operative matchings in image correspondence etc. Many of these models are defined over clusterings of data points (for example pixels in images), and it is important for these models to be robust to perturbations and uncertainty in the data. While several existing papers have studied robust submodular maximization, ours is the first work to study the minimization version under a broad range of combinatorial constraints including cardinality, knapsack, matroid as well as graph based constraints such as cuts, paths, matchings and trees. In each case, we provide scalable approximation algorithms and also study hardness bounds. Finally, we empirically demonstrate the utility of our algorithms on synthetic and real world datasets.\looseness-1
\end{abstract}

%
\section{Introduction}
\label{introduction}
Submodular functions provide a rich class of expressible models for a
variety of machine learning problems. They occur
naturally in two flavors. In minimization problems, they model
notions of cooperation, attractive potentials, and economies of scale,
while in maximization problems, they model aspects of coverage,
diversity, and information. A set function $f: 2^V \to \mathbb R$ over
a finite set $V = \{1, 2, \ldots, n\}$ is \emph{submodular} 
\cite{fujishige2005submodular} if for all
subsets $S, T \subseteq V$, it holds that $f(S) + f(T) \geq f(S \cup
T) + f(S \cap T)$. Given a set $S \subseteq V$, we define the
\emph{gain} of an element $j \notin S$ in the context $S$ as $f(j | S)
= f(S \cup j) - f(S)$. A perhaps more intuitive
characterization of submodularity is as follows:
a function $f$ is submodular if it satisfies
\emph{diminishing marginal returns}, namely $f(j | S) \geq f(j | T)$
for all $S \subseteq T, j \notin T$, and is \emph{monotone} if $f(j |
S) \geq 0$ for all $j \notin S, S \subseteq V$.

Two central optimization problems involving submodular functions are submodular minimization and submodular maximization. Moreover, it is often natural to want to optimize these functions subject to combinatorial constraints~\cite{nemhauser1978,rkiyersemiframework2013,jegelka2011-inference-gen-graph-cuts,goel2009optimal}. 

In this paper, we shall study the problem of robust submodular optimization. Often times in applications we want to optimize several objectives (or criteria) together. There are two natural formulations of this. One is the average case, where we can optimize the (weighted) sum of the submodular functions. Examples of this have been studied in data summarization applications~\cite{lin2012learning,tsciatchek14image,gygli2015video}. The other is robust or worst case, where we want to minimize (or maximize) the maximum (equivalently minimum) among the functions. Examples of this have been proposed for sensor placement and observation selection~\cite{krause08robust}. Robust or worst case optimization is becoming increasingly important since solutions achieved by minimization and maximization can be unstable to perturbations in data. Often times submodular functions in applications are instantiated from various properties of the data (features, similarity functions, clusterings etc.) and obtaining results which are robust to perturbations and variations in this data is critical.

Given monotone submodular functions $f_1, f_2, \cdots, f_l$ to minimize, this paper studies the following optimization problem
\begin{align}
\label{robustsubmin}
\mbox{\textsc{Robust-SubMin}: }\min_{X \in \mathcal C} \max_{i = 1:l} f_i(X)
\end{align}
where $\mathcal C$ stands for combinatorial constraints, which include cardinality, matroid, spanning trees, cuts, s-t paths etc. We shall call this problems \textsc{Robust-SubMin}. A closely related problem is the maximization version of this problem called
 \textsc{Robust-SubMax} as:
 \begin{align}
\label{robustsubmax}
\mbox{\textsc{Robust-SubMax}: } \max_{X \in \mathcal C} \min_{i = 1:l} f_i(X)
\end{align}
Note that when $l = 1$, \textsc{Robust-SubMin} becomes constrained submodular minimization and \textsc{Robust-SubMax} becomes constrained submodular maximization. 

\subsection{Motivating Applications}
This section provides an overview of two specific applications which motivate \textsc{Robust-SubMin}. 
\begin{figure}
\begin{center}
\includegraphics[width=0.18\textwidth]{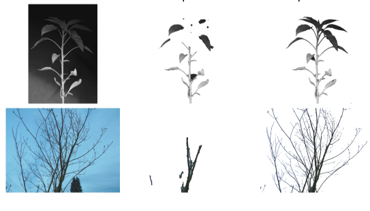}
~
\includegraphics[width=0.26\textwidth]{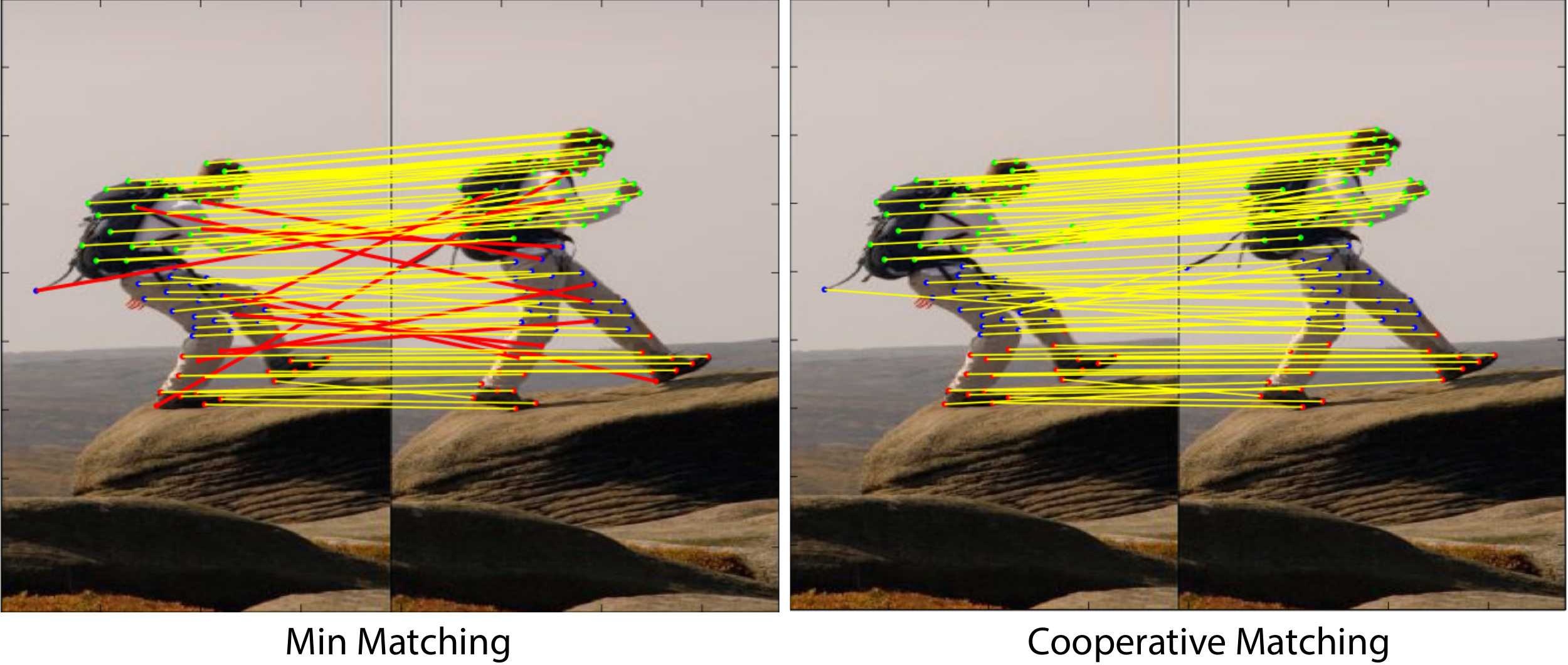}
\caption{
An illustration of co-operative cuts~\cite{jegelka2011-inference-gen-graph-cuts} and cooperative matchings~\cite{iyer2019near}.}
\end{center}
\label{fig:illustration}
\end{figure}

\noindent \textbf{Robust Co-operative Cuts: } Markov random fields with pairwise attractive potentials occur naturally in modeling image segmentation and related applications~\cite{boykov2004experimental}. While models are tractably solved using graph-cuts, they suffer from the shrinking bias problem, and images with elongated edges are not segmented properly. When modeled via a submodular function, however, the cost of a cut is not just the sum of the edge weights, but a richer function that allows cooperation between edges, and yields superior results on many challenging tasks (see, for example, the results of the image segmentations in~\cite{jegelka2011-nonsubmod-vision}). This was achieved in~\cite{jegelka2011-nonsubmod-vision} by partitioning the set of edges $\mathcal E$ of the grid graph into groups of similar edges (or types) $\mathcal E_1, \cdots, \mathcal E_k$, and defining a function $f(S) = \sum_{i = 1}^k \psi_i(w(S \cap \mathcal E_i)), S \subseteq \mathcal E$, where $\psi_i$s are concave functions and $w$ encodes the edge potentials. This ensures that we offer a \emph{discount} to edges of the same type. The left image in Figure~\ref{fig:illustration} illustrates co-operative cuts~\cite{jegelka2011-inference-gen-graph-cuts}. However, instead of taking a single clustering (i.e. a single group of edges $\mathcal E_1, \cdots, \mathcal E_k$), one can instantiate several clusterings  $\{(\mathcal E^1_1, \cdots, \mathcal E^1_k), \cdots, (\mathcal E^l_1, \cdots, \mathcal E^l_k)\}$ and define a robust objective: $f_{robust}(S) = \max_{i = 1:l} \sum_{j = 1}^k \psi_j(w(S \cap \mathcal E^i_j)), S \subseteq \mathcal E$. Minimizing $f_{robust}$ over the family of s-t cuts can achieve segmentations that are robust to many such groupings of edges thereby achieving \emph{robust co-operative segmentations}. We call this \emph{Robust Co-operative Cuts} and this becomes instance of \textsc{Robust-SubMin} with $f_{robust}$ defined above and the constraint $\mathcal C$ being the family of s-t cuts.\looseness-1

\noindent \textbf{Robust Co-operative Matchings: } The simplest model for matching key-points in pairs of images (which is also called the correspondence problem) can be posed as a bipartite matching, also called the assignment problem. These models, however, do not capture interaction between the pixels.  One kind of desirable interaction is that similar or neighboring pixels be matched together. We can achieve this as follows. First we cluster the key-points in the two images into $k$ groups and the induced clustering of edges can be given a discount via a submodular function. This model has demonstrated improved results for image correspondence problems~\cite{iyer2019near}. The right image in Figure~\ref{fig:illustration} illustrates co-operative matchings~\cite{iyer2019near}. Similar to co-operative cuts, it makes sense to be robust among the different clusterings of pixels and minimize the worst case clustering. We call this \emph{Robust Co-operative Matchings}. For more details of this model, please see the experimental section of this manuscript.\looseness-1

\subsection{Related Work}

\noindent \textbf{Submodular Minimization and Maximization: } Most Constrained forms of submodular optimization (minimization and maximization) are NP hard even when $f$ is monotone~\cite{nemhauser1978,wolsey1982analysis,goel2009optimal,jegelka2011-inference-gen-graph-cuts}. The greedy algorithm achieves a $1- 1/e$ approximation for cardinality constrained maximization and a $1/2$ approximation under matroid constraints~\cite{fisher1978analysis}. \cite{chekuri2011submodular} achieved a tight $1 - 1/e$ approximation for matroid constraints using the continuous greedy. \cite{kulik2009maximizing} later provided a similar $1 - 1/e$ approximation under multiple knapsack constraints. Constrained submodular minimization is much harder -- even with simple constraints such as a cardinality lower
bound constraints, the problems are not approximable better than a
polynomial factor of $\Omega(\sqrt{n} )$~\cite{svitkina2008submodular}. Similarly the problems of minimizing a submodular function under covering constraints~\cite{iwata2009submodular}, spanning trees, perfect matchings, paths~\cite{goel2009approximability} and cuts~\cite{jegelka2011-inference-gen-graph-cuts} have similar polynomial hardness factors. In all of these cases, matching upper bounds (i.e approximation algorithms) have been provided~\cite{svitkina2008submodular,iwata2009submodular,goel2009approximability,jegelka2011-inference-gen-graph-cuts}. In~\cite{rkiyersemiframework2013,curvaturemin}, the authors provide a scalable semi-gradient based framework and curvature based bounds which improve upon the worst case polynomial factors for functions with bounded curvature $\kappa_f$ (which several submodular functions occurring in real world applications have).\looseness-1 

\noindent \textbf{Robust Submodular Maximization: } Unlike \textsc{Robust-SubMin}, \textsc{Robust-SubMax} has been extensively studied in literature. One of the first papers to study robust submodular maximization was~\cite{krause08robust}, where the authors study \textsc{Robust-SubMax} with cardinality constraints. The authors reduce this problem to a submodular set cover problem using the \emph{saturate} trick to provide a bi-criteria approximation guarantee. \cite{anari2019robust} extend this work and study \textsc{Robust-SubMax} subject to matroid constraint. They provide bi-criteria algorithms by creating a union of $O(\log l/\epsilon)$ independent sets, with the union set having a guarantee of $1 - \epsilon$. They also discuss extensions to knapsack and multiple matroid constraints and provide bicriteria approximation of $(1 - \epsilon, O(\log l/\epsilon))$. \cite{powers2017constrained} also study the same problem. However, they take a different approach by presenting a bi-criteria algorithm that outputs a feasible set that is good only for a fraction of the $k$ submodular functions $g_i$. \cite{chen2017robust,wilder2017equilibrium} study a slightly general problem of robust non-convex optimization (of which robust submodular optimization is a special case), but they provide weaker guarantees compared to \cite{krause2008efficient,anari2019robust}. 

\noindent \textbf{Robust Min-Max Combinatorial Optimization: } From a minimization perspective, several researchers have studied robust min-max combinatorial optimization (a special case of \textsc{Robust-SubMin} with modular functions) under different combinatorial constraints (see~\cite{aissi2009min,kasperski2016robust} for a survey). Unfortunately these problems are NP hard even for constraints such as knapsack, s-t cuts, s-t paths, assignments and spanning trees where the standard linear cost problems are poly-time solvable~\cite{aissi2009min,kasperski2016robust}. Moreover, the lower bounds on hardness of approximation is $\Omega(log^{1-\epsilon} l)$ ($l$ is the number of functions) for s-t cuts, paths and assignments~\cite{kasperski2009approximability} and $\Omega(log^{1-\epsilon} n)$ for spanning trees~\cite{kasperski2011approximability} for any $\epsilon > 0$. For the case when $l$ is bounded (a constant), fully polynomial time approximation schemes have been proposed for a large class of constraints including s-t paths, knapsack, assignments and spanning trees~\cite{aissi2010general,aissi2009min,kasperski2016robust}. From an approximation algorithm perspective, the best known general result is an approximation factor of $l$ for constraints where the linear function can be exactly optimized in poly-time. For special cases such as spanning trees and shortest paths, one can achieve improved approximation factors of $O(\log n)$~\cite{kasperski2011approximability,kasperski2018approximating} and $\tilde{O}(\sqrt{n})$\footnote{Ignores $\log$ factors} ~\cite{kasperski2018approximating} respectively.\looseness-1

\subsection{Our Contributions}
While past work has mainly focused on \textsc{Robust-SubMax}, This paper is the first work to provide approximation bounds for \textsc{Robust-SubMin}. We start by first providing the hardness results. These results easily follow from the hardness bounds of constrained submodular minimization and robust min-max combinatorial optimization. Next, we provide four families of algorithms for \textsc{Robust-SubMin}. The first and simplest approach approximates the $\max$ among the functions $f_1, \cdots, f_l$ in \textsc{Robust-SubMin} with the average. This in turn converts \textsc{Robust-SubMin} into a constrained submodular minimization problem. We show that solving the constrained submodular minimization with the average of the functions still yields an approximation factor for \textsc{Robust-SubMin} though it is worse compared to the hardness by a factor of $l$. While this approach is conceptually very simple, we do not expect this perform well in practice. We next study the Majorization-Minimization family of algorithms, where we sequentially approximate the functions $f_1, \cdots, f_l$ with their modular upper bounds. Each resulting sub-problem involves solving a min-max combinatorial optimization problem and we use specialized solvers for the different constraints. The third algorithm is the Ellipsoidal Approximation (EA) where we replace the functions $f_i$'s with their ellipsoidal approximations. We show that this achieves the tightest approximation factor for several constraints, but is slower to the majorization-minimization approach in practice. The fourth technique is a continuous relaxation approach where we relax the discrete problem into a continuous one via the \lovasz{} extension. We show that the resulting robust problem becomes a convex optimization problem with an appropriate recast of the constraints. We then provide a rounding scheme thereby resulting in approximation factors for \textsc{Robust-SubMin}. Tables 1 and 2 show the hardness and the resulting approximation factors for several important combinatorial constraints. Along the way, we also provide a family of approximation algorithms and heuristics for \textsc{Robust-Min} (i.e. with modular functions $f_i$ under several combinatorial constraints. We finally compare the four algorithms for different constraints and in each case, discuss heuristics which will make the resulting algorithms practical and scalable for real world problems. Finally, we empirically show the performance of our algorithms on synthetic and real world datasets. We demonstrate the utility of our models in robust co-operative matching and show that our robust model outperforms simple co-operative model from~\cite{iyer2019near} for this task.

\begin{table*}
       \begin{center}
       \scriptsize{
           \begin{tabular}{|l|c|c|c|c|c|c|}
  \hline
    Constraint & Hardness & MMin-AA & EA-AA & MMin & EA & CR
        \\ \hline
    Cardinality ($k$) & $K(\sqrt{n}, \kappa)$ & $l K(k, \kappa)$ & $l K(\sqrt{n}\log n, \kappa)$ & $O(\log l K(k, \kappa_{wc})/\log\log l)$ & $\tilde{O}(\sqrt{m \log l/\log \log l})$ & $n-k+1$ \\
    Trees & $M(K(n, \kappa), \log n)$ & $l K(n, \kappa)$ & $l K(\sqrt{m}\log m, \kappa)$ & $O(\min(\log n, l) K(n, \kappa_{wc}))$ &  $O(\min(\sqrt{\log n}, \sqrt{l}) \sqrt{m}\log m)$ & $m-n+1$ \\
    Matching & $M(K(n, \kappa), \log l)$ & $l K(n, \kappa)$ & $l K(\sqrt{m}\log m, \kappa)$ & $l K(n, \kappa)$ & $\tilde{O}(\sqrt{lm})$ & $n$ \\
    s-t Cuts & $M(K(\sqrt{m}, \kappa), \log l)$ & $l K(n, \kappa)$ & $l K(\sqrt{m}\log m, \kappa)$ & $l K(n, \kappa)$ & $\tilde{O}(\sqrt{lm})$ & $n$ \\
    s-t Paths & $M(K(\sqrt{m}, \kappa), \log l)$ & $l K(n, \kappa)$ & $l K(\sqrt{m}\log m, \kappa)$ & $O(\min(\sqrt{n}, l) K(n, \kappa_{wc}))$ & $O(\min(n^{0.25}, \sqrt{l})\sqrt{m}\log m)$ & $m$ \\
    Edge Cov. & $K(n, \kappa)$ & $l K(n, \kappa)$ & $l K(\sqrt{m}\log m, \kappa)$ & $l K(n, \kappa)$ & $\tilde{O}(\sqrt{lm})$ & $n$ \\
    Vertex Cov. & $2$ & $l K(n, \kappa)$ & $l K(\sqrt{n}\log n, \kappa)$ & $l K(n, \kappa)$ & $\tilde{O}(\sqrt{ln})$ & $2$ \\
   \hline
  \end{tabular}
    \label{tab:1}
    \caption{Approximation bounds and Hardness for \textsc{Robust-SubMin}. $M(.)$ stands for $\max(.)$}
    }
\end{center}
\end{table*}

\begin{table*}
\begin{center}
\begin{tabular}{|l|c|c|c|}
  \hline
    Constraint & Hardness & MMin & EA
        \\ \hline
    Knapsack & $K(\sqrt{n}, \kappa)$ & $K(n, \kappa)$ & $O(K(\sqrt{n}\log n, \kappa))$ \\
    Trees & $K(n, \kappa)$ & $K(n, \kappa)$ & $O(K(\sqrt{m}\log m, \kappa))$ \\
    Matchings & $K(n, \kappa)$ & $K(n, \kappa)$ & $O(K(\sqrt{m}\log m, \kappa))$ \\
    s-t Paths & $K(n, \kappa)$ & $K(n, \kappa)$ & $O(K(\sqrt{m}\log m, \kappa))$ \\
   \hline
  \end{tabular}
    \label{tab:2}
    \caption{Approximation bounds and Hardness of in \textsc{Robust-SubMin} with $l$ constant}
\end{center}
\end{table*}

\section{Main Ideas and Techniques}
In this section, we will review some of the constructs and techniques used in this paper to provide approximation algorithms for \textsc{Robust-SubMin}.

\noindent \textbf{Curvature: } Given a submodular function $f$, the curvature~\cite{vondrak2010submodularity, conforti1984submodular, curvaturemin}: 
\begin{align*}
\kappa_f = 1 - \min_{j \in V} \frac{f(j | V \backslash j)}{f(j)}
\end{align*}
Thanks to submodularity, it is easy to see that $0 \leq \kappa_f \leq 1$. We define a quantity $K(f, \kappa) = f/(1 + (1 - \kappa)(f - 1))$, where $\kappa$ is the curvature. Note that $K(f, \kappa)$ interpolates between $K(f, 1) = f$ and $K(f, 0) = 1$. This quantity shall repeatedly come up in the approximation and hardness bounds in this paper.

\noindent \textbf{The Submodular Polyhedron and \lovasz{} extension }
For a submodular function $f$, the submodular polyhedron $\mathcal P_f$ and the corresponding base polytope $\mathcal B_f$ are respectively defined as $\mathcal P_f = \{ x : x(S) \leq f(S), \forall S \subseteq V \} 
\;\;\;
\mathcal B_f = \mathcal P_f \cap \{ x : x(V) = f(V) \}$. 
For a vector $x \in \mathbb{R}^V$ and a set $X \subseteq V$, we write $x(X)
= \sum_{j \in X} x(j)$. Though $\mathcal P_f$ is defined via $2^n$ inequalities, its extreme point can be easily characterized~\cite{fujishige2005submodular,edmondspolyhedra}. Given any permutation $\sigma$ of the ground set $\{1, 2, \cdots, n\}$, and an associated chain $\emptyset = S^{\sigma}_0 \subseteq S^{\sigma}_1 \subseteq \cdots \subseteq S^{\sigma}_n = V$ with $S^{\sigma}_i =
\{ \sigma(1), \sigma(2), \dots, \sigma(i) \}$, a vector $h^f_{\sigma}$ satisfying,
$h^f_{\sigma}(\sigma(i) = f(S^{\sigma}_i) - f(S^{\sigma}_{i-1}) = f(\sigma(i) | S^{\sigma}_{i-1}), \forall i = 1, \cdots, n$ forms an extreme point of $\mathcal P_f$. Moreover, a natural convex extension of a submodular function, called the \lovasz{} extension~\cite{lovasz1983,edmondspolyhedra} is closely related to the submodular polyhedron, and is defined as $\hat{f}(x) = \max_{h \in \mathcal P_f} \langle h, x \rangle$. Thanks to the properties of the polyhedron, $\hat{f}(x)$ can be efficiently computed: Denote $\sigma_x$ as an ordering induced by $x$, such that $x(\sigma_x(1)) \geq x(\sigma_x(2)) \geq \cdots x(\sigma_x(n))$. Then the \lovasz{} extension is $\hat{f}(x) = \langle h^f_{\sigma_x}, x \rangle$~\cite{lovasz1983,edmondspolyhedra}. The gradient of the \lovasz{} extension $\nabla \hat{f}(x) = h^f_{\sigma_x}$.

\noindent \textbf{Modular lower bounds (Sub-gradients): } Akin to convex functions, submodular functions have tight modular lower bounds. These bounds are related to the sub-differential $\partial_f(Y)$ of the submodular set function $f$ at a set $Y \subseteq V$, which is defined 
\cite{fujishige2005submodular}
as: $\partial_f(Y) = \{y \in \mathbb{R}^n: f(X) - y(X) \geq f(Y) - y(Y),\; \text{for all } X \subseteq V\}$. Denote a sub-gradient at $Y$ by $h_Y \in \partial_f(Y)$. Define $h_Y = h^f_{\sigma_Y}$ (see the definition of $h^f_{\sigma}$ from the previous paragraph) forms a lower bound of $f$, tight at $Y$ --- i.e.,
$h_Y(X) = \sum_{j \in X} h_Y(j) \leq f(X), \forall X
\subseteq V$ and $h_Y(Y) = f(Y)$. Notice that the extreme points of a sub-differential are a subset of the extreme points of the submodular polyhedron. 

\noindent \textbf{Modular upper bounds (Super-gradients): }
We can also define super-differentials $\partial^f(Y)$ of a submodular
function 
\cite{jegelka2011-nonsubmod-vision,rkiyersubmodBregman2012}
at
$Y$: $\partial^f(Y) = \{y \in \mathbb{R}^n: f(X) - y(X) \leq f(Y) - y(Y); \text{for all } X \subseteq V\}$. It is possible, moreover, to provide specific super-gradients~\cite{rkiyersubmodBregman2012,rkiyersemiframework2013, iyer2015polyhedral, iyermirrordescent, iyer2015submodular} that define the following two modular upper bounds:
\begin{align}
m^f_{X, 1}(Y) \triangleq f(X) - \sum_{j \in X \backslash Y } f(j| X \backslash j) + \sum_{j \in Y \backslash X} f(j| \emptyset), \\
m^f_{X, 2}(Y) \triangleq f(X) - \sum_{j \in X \backslash Y } f(j| V \backslash j) + \sum_{j \in Y \backslash X} f(j| X).
\end{align}

Then $m^f_{X, 1}(Y) \geq f(Y)$ and $m^f_{X, 2}(Y) \geq f(Y), \forall Y \subseteq V$ and $m^f_{X, 1}(X) = m^f_{X, 2}(X) = f(X)$. Also note that $m^f_{\emptyset, 1}(Y) = m^f_{\emptyset, 2}(Y) = \sum_{j \in Y} f(j | \emptyset)$. For simplicity denote this as $m^f_{\emptyset}(X)$. Then the following result holds:
\begin{lemma}\cite{rkiyersemiframework2013, curvaturemin}
Given a submodular function $f$ with curvature $\kappa_f$, $f(X) \leq m^f_{\emptyset}(X) \leq K(|X|, \kappa_f) f(X)$
\end{lemma}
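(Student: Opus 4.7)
The plan is to prove the two inequalities separately. For the left one, $f(X) \leq m^f_{\emptyset}(X) = \sum_{j \in X} f(j\mid \emptyset)$, I would just invoke the general modular upper bound stated in the paragraph above the lemma: setting $X = \emptyset$ in $m^f_{X,1}$ gives exactly $m^f_{\emptyset,1}(Y) = \sum_{j \in Y} f(j\mid\emptyset)$, and we already know $m^f_{X,1}(Y) \geq f(Y)$. (For completeness one can rederive it in one line by fixing any ordering $y_1,\ldots,y_k$ of $Y$, telescoping $f(Y)=\sum_i f(y_i\mid\{y_1,\ldots,y_{i-1}\})$, and applying diminishing returns to each marginal against the empty set.)

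The right inequality is the substantive claim. My approach is to lower-bound $f(X)$ in terms of $m^f_{\emptyset}(X)$ using the curvature. By definition of $\kappa_f$, every element satisfies $f(j\mid V\setminus j) \geq (1-\kappa_f)f(j)$, and by submodularity $f(j_i\mid\{j_1,\ldots,j_{i-1}\}) \geq f(j_i\mid V\setminus j_i) \geq (1-\kappa_f)f(j_i)$ for any chain inside $X$. I would fix an ordering of $X=\{j_1,\ldots,j_k\}$, expand $f(X)$ as a telescoping sum, apply the curvature inequality to the marginals with $i \geq 2$, and keep the first term $f(j_1\mid\emptyset)=f(j_1)$ exact.

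The one trick, and what I expect to be the only non-routine step, is that simply lower-bounding \emph{all} marginals by $(1-\kappa_f)f(j_i)$ would only give $f(X)\geq(1-\kappa_f)m^f_{\emptyset}(X)$, which is weaker than the claimed factor $1/K(k,\kappa_f)$. The fix is to choose the ordering so that $j_1 = \arg\max_{j \in X} f(j)$; then
\begin{align*}
f(X) &\geq f(j_1) + (1-\kappa_f)\sum_{i=2}^{k} f(j_i) \\
 &= (1-\kappa_f)\,m^f_{\emptyset}(X) + \kappa_f\,\max_{j\in X} f(j).
\end{align*}
Using $\max_{j\in X} f(j) \geq m^f_{\emptyset}(X)/k$ (since the max is at least the average), this simplifies to
\[
f(X) \geq \Bigl[(1-\kappa_f) + \kappa_f/k\Bigr] m^f_{\emptyset}(X) = \frac{1 + (1-\kappa_f)(k-1)}{k}\, m^f_{\emptyset}(X).
\]

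Rearranging gives $m^f_{\emptyset}(X) \leq \frac{k}{1 + (1-\kappa_f)(k-1)} f(X) = K(|X|,\kappa_f)\,f(X)$, which is exactly the claimed bound. Sanity checks: at $\kappa_f=0$ (modular case) the factor collapses to $K(k,0)=1$, consistent with $f(X)=m^f_{\emptyset}(X)$; at $\kappa_f=1$ (fully curved) the factor becomes $k=|X|$, matching the worst-case behavior of, say, the rank function of a matroid.
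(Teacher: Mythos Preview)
The paper does not supply a proof of this lemma; it is stated with citations to \cite{rkiyersemiframework2013,curvaturemin} and used as a black box. So there is nothing in the paper to compare against, and your write-up stands on its own.

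Your argument is correct. The left inequality is immediate from the supergradient property as you note. For the right inequality, the key non-routine step is exactly the one you identify: ordering $X$ so that the first element has the largest singleton value, keeping that marginal exact, and lower-bounding the remaining $k-1$ marginals via curvature. Combining $f(X) \geq (1-\kappa_f)\,m^f_{\emptyset}(X) + \kappa_f \max_{j\in X} f(j)$ with $\max_{j\in X} f(j) \geq m^f_{\emptyset}(X)/|X|$ yields precisely the factor $\frac{1+(1-\kappa_f)(|X|-1)}{|X|} = 1/K(|X|,\kappa_f)$, matching the claim. One small implicit assumption worth flagging in a polished version: the max-versus-average step and the curvature inequality $f(j\mid V\setminus j) \geq (1-\kappa_f)f(j)$ presuppose $f(j) \geq 0$, i.e., monotonicity, which is the standing hypothesis in the paper but is not restated in the lemma itself.
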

\noindent \textbf{Ellipsoidal Approximation: } Another generic approximation of a submodular function, introduced by Goemans et.\ al~\cite{goemans2009approximating}, is based on approximating the submodular polyhedron by an ellipsoid. The main result states that any polymatroid (monotone submodular) function $f$,
can be approximated by a function of the form $\sqrt{w^f(X)}$ for a
certain modular weight vector $w^f \in \mathbb R^V$, such that $\sqrt{w^f(X)} \leq f(X) \leq
O(\sqrt{n}\log{n}) \sqrt{w^f(X)}, \forall X \subseteq V$. A simple trick then provides a curvature-dependent approximation~\cite{curvaturemin} ---
we define the $\curvf{f}$-\emph{\truncated{}} version of $f$ as
follows: $f^{\kappa}(X) \triangleq \bigl[f(X) - {(1 - \curvf{f})} \sum_{j \in X} f(j)\bigr]/ \curvf{f}$. Then, the submodular function $f^{\text{ea}}(X) = \curvf{f} \sqrt{w^{f^{\kappa}}(X)} + (1 -
  \curvf{f})\sum_{j \in X} f(j)$ satisfies~\cite{curvaturemin}:
\begin{align}
f^{\text{ea}}(X) 
\leq f(X) 
\leq O\left(K(\sqrt{n}\log n, \kappa_f)\right) f^{\text{ea}}(X), \forall X \subseteq V
\end{align}

\section{Algorithms and Hardness Results}
In this section, we shall go over the hardness and approximation algorithms for \textsc{Robust-SubMin}. We shall consider two cases, one where $l$ is bounded (i.e. its a constant), and the other where $l$ is unbounded. We start with some notation. We denote the graph as $G = (\mathcal V, \mathcal E)$ with $|\mathcal V| = n, |\mathcal E| = m$.  Depending on the problem at hand, the ground set $V$ can either be the set of edges ($V = \mathcal E$) or the set of vertives ($V = \mathcal V$). The groundset is the set of edges in the case of trees, matchings, cuts, paths and edge covers, while in the case of vertex covers, they are defined on the vertices.

\subsection{Hardness of \textsc{Robust-SubMin}} 
Since \textsc{Robust-SubMin} generalizes robust min-max combinatorial optimization (when the functions are modular), we have the hardness bounds from~\cite{kasperski2009approximability,kasperski2011approximability}. For the modular case, the lower bounds are $\Omega(log^{1-\epsilon} l)$ ($l$ is the number of functions) for s-t cuts, paths and assignments~\cite{kasperski2009approximability} and $\Omega(log^{1-\epsilon} n)$ for spanning trees~\cite{kasperski2011approximability} for any $\epsilon > 0$. These hold unless $NP \subseteq DTIME(n^{poly \log n})$\cite{kasperski2009approximability,kasperski2011approximability}. Moreover, since \textsc{Robust-SubMin} also generalizes constrained submodular minimization, we get the curvature based hardness bounds from~\cite{curvaturemin,goel2009optimal,jegelka2011-inference-gen-graph-cuts}. The hardness results are in the first column of Table~1. The curvature $\kappa$ corresponds to the worst curvature among the functions $f_i$ (i.e. $\kappa = \max_i \kappa_i$). 

\subsection{Algorithms with Modular functions $f_i$'s}
In this section, we shall study approximation algorithms for \textsc{Robust-SubMin} when the functions $f_i$'s are modular, i.e. $f_i(X) = \sum_{j \in X} f_i(j)$. We call this problem \textsc{Robust-Min}. Most previous work~\cite{aissi2010general,aissi2009min,kasperski2016robust} has focused on fully polynomial approximation schemes when $l$ is small. These algorithms are exponential in $l$ and do not scale beyond $l = 3$ or $4$. Instead, we shall study approximation algorithms for this problem. Define two simple approximations of the function $F(X) = \max_i f_i(X)$ when $f_i$'s are modular. The first is $\hat{F}(X) = \sum_{i \in X} \max_{j = 1:l} f_j(i)$ and the second is $\tilde{F}(X) = 1/l \sum_{i = 1}^l \sum_{j \in X} f_i(j) = 1/l \sum_{i = 1}^l f_i(X)$. 

 \begin{lemma}\label{mod-approximations}
 Given $f_i(j) \geq 0$, it holds that $\hat{F}(X) \geq F(X) \geq \frac{1}{l} \hat{F}(X)$. Furthermore, $\tilde{F}(X) \leq F(X) \leq l\tilde{F}(X)$. Given a $\beta$-approximate algorithm for optimizing linear cost functions over the constraint $\mathcal C$, we can achieve an $l\beta$-approximation algorithm for \textsc{Robust-Min}.\looseness-1
 \end{lemma}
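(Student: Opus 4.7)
The plan is to verify the three assertions in order. Parts 1 and 2 are sandwich inequalities between $F$ and its two modular surrogates $\hat{F}$ and $\tilde{F}$; these are routine bounds on max, sum, and average of nonnegative quantities, so I would handle them first as warm-up. The algorithmic corollary then follows by feeding one of the surrogates into the given $\beta$-approximation oracle and chaining the inequalities.

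For the bounds on $\hat{F}$: the upper estimate $\hat{F}(X) \geq F(X)$ is pointwise --- any $i^\star$ attaining $F(X) = f_{i^\star}(X)$ satisfies $f_{i^\star}(j) \leq \max_i f_i(j)$ for every $j \in V$, so summing over $j \in X$ gives the claim. For the lower estimate $F(X) \geq \tfrac{1}{l}\hat{F}(X)$, I would partition $X = \bigcup_i X_i$ where $X_i \subseteq X$ collects those $j$ for which index $i$ realizes the per-element max (break ties arbitrarily), rewrite $\hat{F}(X) = \sum_i \sum_{j \in X_i} f_i(j)$, extend each inner sum from $X_i$ to $X$ using nonnegativity of $f_i$, and conclude with $\sum_i f_i(X) \leq l \max_i f_i(X) = l F(X)$. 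The bounds on $\tilde{F}$ reduce to the elementary fact that the maximum of $l$ nonnegative numbers lies between their average and their total sum, applied to the tuple $(f_1(X), \ldots, f_l(X))$.

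For the algorithmic claim, my plan is to run the $\beta$-approximation oracle on the modular surrogate $\tilde{F}$ (which is modular because it is a scaled sum of modular functions) over the constraint $\mathcal{C}$, obtaining $X^\star$ with $\tilde{F}(X^\star) \leq \beta\, \tilde{F}(X^{\mathrm{opt}})$, where $X^{\mathrm{opt}} \in \mathcal{C}$ is optimal for \textsc{Robust-Min}. Then I would chain the sandwich inequalities of part 2, using the loose direction $F(X^\star) \leq l\,\tilde{F}(X^\star)$ on the left and the tight direction $\tilde{F}(X^{\mathrm{opt}}) \leq F(X^{\mathrm{opt}})$ on the right, to arrive at $F(X^\star) \leq l\beta\, F(X^{\mathrm{opt}})$.

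There is no substantive obstacle; the only care required is to direct the loose side of each sandwich toward $X^\star$ and the tight side toward $X^{\mathrm{opt}}$, so that the factor of $l$ appears exactly once in the final bound. Either surrogate $\tilde{F}$ or $\hat{F}$ works in the algorithmic step, but I would write the argument with $\tilde{F}$ since an average of modular functions plugs most transparently into a linear-cost oracle.
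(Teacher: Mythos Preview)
Your proposal is correct and follows essentially the same approach as the paper. The only cosmetic difference is that for the bound $\hat{F}(X)\leq lF(X)$ the paper uses the per-element inequality $\max_i f_i(j)\leq \sum_i f_i(j)$ directly (yielding $\hat{F}(X)\leq \sum_i f_i(X)=l\tilde{F}(X)\leq lF(X)$), whereas you route through an explicit partition $X=\bigcup_i X_i$ to reach the same inequality $\hat{F}(X)\leq \sum_i f_i(X)$; and the paper additionally remarks that the symmetric chain via $\hat{F}$ also yields the $l\beta$-approximation, while you (correctly) note either surrogate works and present only the $\tilde{F}$ version.
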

 \begin{proof}
We first prove the second part.  Its easy to see that $F(X) = \max_i f_i(X) \leq \sum_i f_i(X) = l\tilde{F}(X)$ which is the second inequality. The first inequality follows from the fact that $\sum_i f_i(X) \leq l \max_i f_i(X)$. To prove the first result, we start with proving $F(X) \leq \hat{F}(X)$. For a given set $X$, let $i_X$ be the index which maximizes $F$ so $F(X) = \sum_{j \in X} f_{i_X}(j)$. Then $f_{i_X}(j) \leq \max_i f_i(j)$ from which we get the result. Next, observe that $\hat{F}(X) \leq \sum_{i = 1}^l f_i(X) = l\tilde{F}(X) \leq lF(X)$ which follows from the inequality corresponding to $\tilde{F}$. 

Now given a $\beta$-approximation algorithm for optimizing linear cost functions over the constraint $\mathcal C$, denote $\tilde{X}$ by optimizing $\tilde{F}(X)$ over $\mathcal C$. Also denote $X^*$ as the optimal solution by optimizing $F$ over $\mathcal C$ and $\tilde{X^*}$ be the optimal solution
for optimizing $\tilde{F}$ over $\mathcal C$. Since $\tilde{X}$ is a $\beta$ approximation for $\tilde{F}$ over $\mathcal C$, $\tilde{F}(\tilde{X}) \leq \beta \tilde{F}(\tilde{X^*}) \leq \beta \tilde{F}(X^*)$. The last inequality holds since $X^*$ is feasible (i.e. it belongs to $\mathcal C$) so it must hold that $\tilde{F}(X^*) \geq \tilde{F}(\tilde{X^*})$.
A symmetric argument applies to $\hat{F}$. In particular, $F(\hat{X}) \leq \hat{F}(\hat{X}) \leq \beta \hat{F}(X^*) \leq \beta l F(X^*)$. The inequality $\hat{F}(\hat{X}) \leq \beta \hat{F}(X^*)$ again holds due to an argument similar to the $\tilde{F}$ case.
 \end{proof}
 
 Note that for most constraints $\mathcal C$, $\beta$ is $1$. In practice, we can take the better of the two solutions obtained from the two approximations above.

Next, we shall look at higher order approximations of $F$. Define the function $F_a(X) = (\sum_{i = 1}^l [f_i(X)]^a)^\frac{1}{a}$. Its easy to see that $F_a(X)$ comes close to $F$ as $a$ becomes large. 
\begin{lemma}
Define $F_a(X) = (\sum_{i = 1}^l [f_i(X)]^a)^\frac{1}{a}$. Then it holds that $F(X) \leq F_a(X) \leq l^\frac{1}{a} F(X)$. Moreover, given a $\beta$-approximation algorithm for optimizing $F_a(X)$ over $\mathcal C$, we can obtain a $\beta l^\frac{1}{a}$-approximation for \textsc{Robust-Min}.
\end{lemma}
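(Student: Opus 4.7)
The plan is to prove both parts by treating $F_a(X)$ as the $\ell_a$-norm of the nonnegative vector $(f_1(X), \ldots, f_l(X))$ and exploiting the standard comparison between the $\ell_\infty$-norm (the max) and the $\ell_a$-norm. This mirrors the structure of Lemma~\ref{mod-approximations}: first establish sandwich inequalities between $F_a$ and $F$, then combine them with the $\beta$-approximation guarantee via a short chain of inequalities.

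For the sandwich bounds, I would raise everything to the $a$-th power to avoid dealing with fractional exponents. For the lower bound $F(X) \leq F_a(X)$, since every $f_i(X) \geq 0$, I note that $[F(X)]^a = \max_i [f_i(X)]^a \leq \sum_{i=1}^l [f_i(X)]^a = [F_a(X)]^a$, and taking $a$-th roots preserves the inequality. For the upper bound, I similarly observe $\sum_{i=1}^l [f_i(X)]^a \leq l \cdot \max_i [f_i(X)]^a = l \cdot [F(X)]^a$, so $F_a(X) \leq l^{1/a} F(X)$.

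For the approximation-factor claim, I would let $X_a$ be the $\beta$-approximate minimizer of $F_a$ over $\mathcal C$, let $X^*$ denote the optimal solution of \textsc{Robust-Min} (i.e., the optimal minimizer of $F$ over $\mathcal C$), and let $X_a^*$ denote the optimal minimizer of $F_a$ over $\mathcal C$. Then I chain the inequalities
\begin{align*}
F(X_a) \leq F_a(X_a) \leq \beta F_a(X_a^*) \leq \beta F_a(X^*) \leq \beta l^{1/a} F(X^*),
\end{align*}
where the first and last inequalities come from the sandwich bounds just proved, the second from the $\beta$-approximation guarantee on $F_a$, and the third from feasibility of $X^*$ for the $F_a$-problem (so $F_a(X_a^*) \leq F_a(X^*)$).

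I do not expect any serious obstacle: the only mild subtlety is confirming that raising to the $a$-th power and taking roots is order-preserving, which holds since all quantities involved are nonnegative (as the $f_i$ are monotone submodular with $f_i(\emptyset) = 0$, so $f_i(X) \geq 0$). The argument is entirely parallel to the $\tilde{F}, \hat{F}$ argument in Lemma~\ref{mod-approximations}, just replacing the arithmetic-mean/pointwise-max surrogate with the $\ell_a$-norm surrogate, which yields a tunable trade-off parameter $a$ at the cost of having to optimize a non-modular (but still scalar) objective.
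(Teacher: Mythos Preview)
Your proof is correct and follows essentially the same approach as the paper: the paper establishes $F(X)\leq F_a(X)$ from $[F(X)]^a\leq\sum_i [f_i(X)]^a$ and $F_a(X)\leq l^{1/a}F(X)$ from $f_i(X)\leq F(X)$ for all $i$, then defers the approximation claim to the same chaining argument as in Lemma~\ref{mod-approximations}, which you have written out explicitly. The only minor quibble is that in this section the $f_i$ are assumed to be nonnegative modular (not general monotone submodular), but this does not affect the argument since nonnegativity is all you use.
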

\arxiv{
\begin{proof}
Its easy to see that $F(X) \leq F_a(X)$ since $[F(X)]^a \leq \sum_{i = 1}^l [f_i(X)]^a$. Next, note that $\forall i, f_i(X) \leq F(X)$ and hence $\sum_{i = 1}^l [f_i(X)]^a \leq l [F(X)]^a$ which implies that $F_a(X) \leq k^{\frac{1}{a}} F(X)$ which proves the result. The second part follows from arguments similar to the Proof of Lemma 2.
\end{proof}
}

Note that optimizing $F_a(X)$ is equivalent to optimizing $\sum_{i = 1}^l [f_i(X)]^a$ which is a higher order polynomial function. For example, when $a = 2$ we get the quadratic combinatorial program~\cite{buchheim2018quadratic} which can possibly result in a a $\sqrt{l}$ approximation given a exact or approximate quadratic optimizer. Unfortunately, optimizing this problem is NP hard for most constraints when $a \geq 2$. However, several heuristics and efficient solvers~\cite{benson1999mixed,lawler1963quadratic,buchheim2018quadratic,loiola2007survey} exist for solving this in the quadratic setting. Moreover, for special cases such as the assignment (matching) problem, specialized algorithms such as the Graduated assignment algorithm~\cite{gold1996graduated} exist for solving this. While these are not guaranteed to theoretically achieve the optimal solution, they work quite in practice, thus yielding a family of heuristics for \textsc{Robust-Min}.

\subsection{Average Approximation based Algorithms} 
In this section, we shall look at a simple approximation of $\max_i f_i(X)$, which in turn shall lead to an approximation algorithm for \textsc{Robust-SubMin}. We first show that $f_{avg}(X) = 1/l \sum_{i = 1}^l f_i(X)$ is an $l$-approximation of $\max_{i = 1:l} f_i(X)$, which implies that minimizing $f_{avg}(X)$ implies an approximation for \textsc{Robust-SubMin}.
\begin{theorem}
Given a non-negative set function $f$, define $f_{avg}(X) = \frac{1}{l}  \sum_{i = 1}^l f_i(X)$. Then $f_{avg}(X) \leq \max_{i = 1:l} f_i(X) \leq lf_{avg}(X)$. Denote $\hat{X}$ as $\beta$-approximate optimizer of $f_{avg}$. Then $\max_{i = 1:l} f_i(\hat{X}) \leq l\beta \max_{i = 1:l} f_i(X^*)$ where $X^*$ is the exact minimizer of \textsc{Robust-SubMin}.
\end{theorem}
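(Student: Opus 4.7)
The plan is to first establish the sandwich inequality $f_{avg}(X) \leq \max_i f_i(X) \leq l f_{avg}(X)$, and then chain together three simple inequalities to get the approximation bound. All the content is elementary, so the proof is mostly bookkeeping rather than involving a genuine obstacle.

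For the sandwich, I would argue both sides directly from non-negativity of the $f_i$. The lower bound $f_{avg}(X) \leq \max_i f_i(X)$ is just ``average is at most max,'' namely $\frac{1}{l}\sum_i f_i(X) \leq \frac{1}{l} \cdot l \max_i f_i(X)$. The upper bound follows because each $f_i(X) \geq 0$, so $\max_i f_i(X) \leq \sum_{i=1}^l f_i(X) = l f_{avg}(X)$. This uses only non-negativity, matching the hypothesis.

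For the approximation guarantee, the plan is to combine four steps. Let $\tilde{X}^*$ denote an exact minimizer of $f_{avg}$ over $\mathcal{C}$; by the $\beta$-approximation assumption, $f_{avg}(\hat{X}) \leq \beta f_{avg}(\tilde{X}^*)$. Since $X^* \in \mathcal{C}$, optimality of $\tilde{X}^*$ gives $f_{avg}(\tilde{X}^*) \leq f_{avg}(X^*)$. Then I would chain:
\begin{align*}
\max_{i} f_i(\hat{X}) \;\leq\; l f_{avg}(\hat{X}) \;\leq\; l \beta f_{avg}(\tilde{X}^*) \;\leq\; l \beta f_{avg}(X^*) \;\leq\; l \beta \max_{i} f_i(X^*),
\end{align*}
where the first inequality uses the upper half of the sandwich, the second the approximation hypothesis, the third optimality of $\tilde{X}^*$, and the last the lower half of the sandwich.

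There is no serious obstacle here; the only thing to be careful about is the direction of the approximation inequality (we minimize, so $\beta$-approximate means $f_{avg}(\hat{X}) \leq \beta f_{avg}(\tilde{X}^*)$) and the fact that comparing with $X^*$ rather than $\tilde{X}^*$ is legitimate precisely because $X^*$ is feasible for the surrogate problem. Since the argument parallels the structure of Lemma~\ref{mod-approximations} in the modular case, I would keep the exposition brief and simply point to that analogy.
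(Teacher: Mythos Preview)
Your proposal is correct and follows essentially the same approach as the paper's proof: establish the sandwich $f_{avg}(X) \leq \max_i f_i(X) \leq l f_{avg}(X)$ from non-negativity, then chain the upper half, the $\beta$-approximation, feasibility of $X^*$, and the lower half. The only cosmetic difference is that you make the intermediate optimizer $\tilde{X}^*$ of $f_{avg}$ explicit, whereas the paper collapses the two steps $f_{avg}(\hat{X}) \leq \beta f_{avg}(\tilde{X}^*) \leq \beta f_{avg}(X^*)$ into a single inequality.
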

\arxiv{
 \begin{proof}
 To prove the first part, notice that $f_i(X) \leq \max_{i = 1:l} f_i(X)$, and hence $1/l \sum_i f_i(X) \leq \max_{i = 1:l} f_i(X)$. The other inequality also directly follows since the $f_i$'s are non-negative and hence $\max_{i = 1:l} f_i(X) \leq \sum_i f_i(X) = l f_{avg}(X)$. To prove the second part, observe that $\max_i f_i(\hat{X}) \leq lf_{avg}(\hat{X}) \leq l\beta f_{avg}(X^*) \leq l\beta \max_i f_i(X^*)$. The first inequality holds from the first part of this result, the second inequality holds since $\hat{X}$ is a $\beta$-approximate optimizer of $f_{avg}$ and the third part of the theorem holds again from the first part of this result.
 \end{proof}}
 
 Since $f_{avg}$ is a submodular function, we can use the majorization-minimization (which we call MMin-AA) and ellipsoidal approximation (EA-AA) for constrained submodular minimization~\cite{rkiyersemiframework2013,curvaturemin,goel2009optimal,jegelka2011-inference-gen-graph-cuts}. The following corollary provides the approximation guarantee of MMin-AA and EA-AA for \textsc{Robust-SubMin}.
 \begin{corollary}
 Using the majorization minimization (MMin) scheme with the average approximation achieves an approximation guarantee of $l K(|X^*|, \kappa_{avg})$ where $X^*$ is the optimal solution of \textsc{Robust-SubMin} and $\kappa_{avg}$ is the curvature of $f_{avg}$. Using the curvature-normalized ellipsoidal approximation algorithm from~\cite{curvaturemin,goemans2009approximating} achieves a guarantee of $O(lK(|\mathcal V|\log |\mathcal V|, \kappa_{avg}))$
 \end{corollary}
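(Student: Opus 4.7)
The plan is to combine Theorem~1 (which turns a $\beta$-approximate minimizer of $f_{avg}$ into an $l\beta$-approximate solution of \textsc{Robust-SubMin}) with the already-known curvature-based guarantees of the MMin and EA schemes for constrained submodular minimization. So the corollary reduces to plugging the right $\beta$ into Theorem~1.

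First I would note that $f_{avg} = \tfrac{1}{l}\sum_{i=1}^l f_i$ is a non-negative monotone submodular function, since monotone submodularity is closed under non-negative combinations, and that its curvature is exactly $\kappa_{avg}$ by definition. Hence $\min_{X \in \mathcal{C}} f_{avg}(X)$ is a standard constrained submodular minimization instance, to which both MMin and EA apply.

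For the MMin-AA bound I would invoke the majorization--minimization analysis of~\cite{rkiyersemiframework2013, curvaturemin}: at each iterate $X_t$ one replaces $f_{avg}$ by its modular upper bound $m^{f_{avg}}_{X_t}$ and exactly (or approximately) optimizes this linear cost over $\mathcal{C}$. Using Lemma~1 from the excerpt, namely $m^{f_{avg}}_{\emptyset}(X) \leq K(|X|, \kappa_{avg}) \, f_{avg}(X)$, together with the standard MMin descent argument, one gets $f_{avg}(\hat X) \leq K(|X^*|, \kappa_{avg}) \, f_{avg}(X^*)$. Feeding $\beta = K(|X^*|, \kappa_{avg})$ into Theorem~1 gives the first claim. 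For EA-AA I would instead substitute $f_{avg}$ by its \truncated{} ellipsoidal approximation $f^{\text{ea}}_{avg}$, which by the bound displayed in the Main Ideas section satisfies $f^{\text{ea}}_{avg}(X) \leq f_{avg}(X) \leq O\!\left(K(\sqrt{n}\log n, \kappa_{avg})\right) f^{\text{ea}}_{avg}(X)$; minimizing the surrogate over $\mathcal{C}$ (which is tractable because $f^{\text{ea}}$ is a square root of a modular function plus a modular term) thus yields $\beta = O(K(\sqrt{n}\log n, \kappa_{avg}))$, and Theorem~1 delivers the second claim with $n = |\mathcal V|$.

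There is no real obstacle here: the statement is essentially a bookkeeping composition. The only care needed is to make sure that the curvature that appears inside $K(\cdot,\cdot)$ is $\kappa_{avg}$ (the curvature of the averaged surrogate) rather than $\max_i \kappa_i$, since in general $\kappa_{avg}$ can be strictly smaller and this is what the MMin/EA analyses actually produce when they are run on $f_{avg}$ itself.
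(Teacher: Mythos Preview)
Your proposal is correct and matches the paper's own argument exactly: the paper simply states that the corollary ``directly follows by combining the approximation guarantee of MMin and EA~\cite{curvaturemin,rkiyersemiframework2013} with Theorem~1,'' which is precisely the composition you spell out (apply the known $\beta = K(|X^*|,\kappa_{avg})$ or $\beta = O(K(\sqrt{n}\log n,\kappa_{avg}))$ bound to $f_{avg}$ and multiply by $l$ via Theorem~1). Your added remark that the relevant curvature is $\kappa_{avg}$ rather than $\max_i\kappa_i$ is a correct and useful clarification that the paper leaves implicit.
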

This corollary directly follows by combining the approximation guarantee of MMin and EA~\cite{curvaturemin,rkiyersemiframework2013} with Theorem 1. Substituting the values of $|\mathcal V|$ and $|X^*|$ for various constraints, we get the results in Table~1. While the average case approximation method provides a bounded approximation guarantee for \textsc{Robust-SubMin}, it defeats the purpose of the robust formulation. Moreover, the approximation factor is worse by a factor of $l$. Below, we shall study some techniques which directly try to optimize the robust formulation.
 
 \subsection{Majorization-Minimization Algorithm} The Majorization-Minimization algorithm is a sequential procedure which uses upper bounds of the submodular functions defined via supergradients. Starting with $X^0 = \emptyset$, the algorithm proceeds as follows. At iteration $t$, it constructs modular upper bounds for each function $f_i$, $m^{f_i}_{X^t}$ which is tight at $X^t$. We can use either one of the two modular upper bounds defined in Section 2. The set $X^{t+1} = \mbox{argmin}_{X \in \mathcal C} \max_i m^{f_i}_{X^t}(X)$. This is a min-max robust optimization problem. The following theorem provides the approximation guarantee for MMin.\looseness-1
 \begin{theorem} \label{thm3}
 If $l$ is a constant, MMin achieves an approximation guarantee of $(1 + \epsilon)K(|X^*|, \kappa_{wc})$ for the knapsack, spanning trees, matching and s-t path problems. The complexity of this algorithm is exponential in $l$. When $l$ is unbounded, MMin achieves an approximation guarantee of $lK(|X^*|, \kappa_{wc})$. For spanning trees and shortest path constraints, MMin achieves a $O(\min(\log n, l) K(n, \kappa_{wc}))$ and a $O(\min(\sqrt{n}, l) K(n, \kappa_{wc}))$ approximation. Under cardinality and partition matroid constraints, MMin achieves a $O(\log l K(n, \kappa_{wc})/\log\log l)$ approximation.
 \end{theorem}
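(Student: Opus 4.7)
The plan is to reduce each iteration of MMin to a modular min--max problem, then combine known approximation results for that subproblem with the curvature bound from Lemma~1. By the supergradient property, $f_i(X) \le m^{f_i}_{X^t}(X)$ with equality at $X = X^t$, so
\[
\max_i f_i(X^{t+1}) \le \max_i m^{f_i}_{X^t}(X^{t+1}) \le \max_i m^{f_i}_{X^t}(X^t) = \max_i f_i(X^t).
\]
Hence MMin is monotonically non-increasing in the worst-case objective, and it suffices to bound the first iterate $X^1$ obtained starting from $X^0 = \emptyset$.

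At $X^0 = \emptyset$ the surrogate $m^{f_i}_\emptyset(X) = \sum_{j \in X} f_i(j)$ is modular, so the subproblem $\min_{X \in \mathcal C}\max_i m^{f_i}_\emptyset(X)$ is a pure robust min--max combinatorial optimization instance with linear costs. Let $\beta_{\mathcal C}$ denote the best known approximation ratio for that instance under constraint $\mathcal C$. Chaining the upper-bound property, the $\beta_{\mathcal C}$-approximate solver, and Lemma~1 applied to each $f_i$ (using that $K(s,\kappa)$ is non-decreasing in $\kappa$, so $K(|X^*|, \kappa_{f_i}) \le K(|X^*|, \kappa_{wc})$) gives
\[
\max_i f_i(X^1) \le \max_i m^{f_i}_\emptyset(X^1) \le \beta_{\mathcal C}\max_i m^{f_i}_\emptyset(X^*) \le \beta_{\mathcal C}\,K(|X^*|,\kappa_{wc}) \max_i f_i(X^*),
\]
where $X^*$ is the optimum of \textsc{Robust-SubMin}. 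By monotonicity the same bound persists for every later iterate and thus for the final output of MMin.

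It remains only to substitute the correct $\beta_{\mathcal C}$ for each constraint. For bounded $l$, the FPTAS of Aissi--Bazgan--Vanderpooten for robust min--max knapsack, spanning trees, matchings and s--t paths yields $\beta_{\mathcal C} = 1+\epsilon$ at cost exponential in $l$. For unbounded $l$, Lemma~2 supplies a generic $\beta_{\mathcal C} = l$ by reducing to the averaged modular instance; this can be sharpened to $O(\log n)$ for spanning trees, $\tilde O(\sqrt n)$ for shortest paths, and $O(\log l/\log\log l)$ for cardinality and partition matroid constraints using results from the robust combinatorial optimization literature. Taking the minimum of the generic $l$ and each specialized bound produces the $\min(\cdot,\cdot)$ factors stated in the theorem.

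The submodular part of the argument collapses into the single inequality $m^{f_i}_\emptyset(X^*) \le K(|X^*|,\kappa_{wc}) f_i(X^*)$, so the main obstacle lies entirely on the modular side: one must invoke non-trivial external approximation algorithms for each combinatorial structure. In particular, the constant-$l$ case depends on FPTAS constructions tailored to each structure (via scaling and pseudo-polynomial dynamic programming), while the $O(\log n)$, $\tilde O(\sqrt n)$, and $O(\log l/\log\log l)$ factors rely on LP-relaxation and randomized rounding arguments specific to robust min--max modular optimization. Everything else, including the monotonicity of the iterates and the telescoping into a single Lemma~1 application, is essentially book-keeping.
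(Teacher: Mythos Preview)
Your proposal is correct and follows essentially the same route as the paper: bound the first iterate from $X^0=\emptyset$ by chaining the supergradient upper bound, an $\alpha$-approximate solver for the modular min--max subproblem, and Lemma~1 at $X^*$, then plug in the appropriate $\alpha$ for each constraint class. The only noteworthy difference is that you make the monotonicity of the MMin iterates explicit via the tightness-at-$X^t$ argument, whereas the paper just asserts that later iterations can only help; your version is a bit cleaner on that point but not a different proof.
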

 
 Substituting the appropriate bounds on $|X^*|$ for the various constraints, we get the results in Tables 1 and 2. $\kappa_{wc}$ corresponds to the worst case curvature $\max_i \kappa_{f_i}$. 
 
 We now elaborate on the Majorization-Minimization algorithm. 
  At every round of the majorization-minimization algorithm we need to solve 
 \begin{align} \label{mmin-subproblem}
 X^{t+1} = \mbox{argmin}_{X \in \mathcal C} \max_i m^{f_i}_{X^t}(X).
 \end{align}
 We consider three cases. The first is when $l$ is a constant. In that case, we can use an FPTAS to solve Eq.~\eqref{mmin-subproblem}~\cite{aissi2009min,aissi2010general}. We can obtain a $1+\epsilon$ approximation with complexities of $O(n^{l+1}/\epsilon^{l-1})$ for shortest paths, $O(mn^{l+4}/\epsilon^l \log{n/\epsilon})$ for spanning trees, $O(mn^{l+4}/\epsilon^l \log{n/\epsilon})$ for matchings and  $O(n^{l+1}/\epsilon^l)$ for knapsack~\cite{aissi2010general}. The results for constant $l$ is shown in Table~2 (column corresponding to MMin). The second case is a generic algorithm when $l$ is not constant. Note that we cannot use the FPTAS since they are all exponential in $l$.  In this case, at every iteration of MMin can use the framework of approximations discussed in Section 3.2, and choose the solution with a better objective value. In particular, if we use the two modular bounds of the $\max$ function (i.e. the average the the max bounds), we can obtain $l$-approximations of $\max_i m^{f_i}_{X^t}$. One can also use the quadratic and cubic bounds which can possibly provide $\sqrt{l}$ or $l^{0.3}$ bounds. While these higher order bounds are still theoretically NP hard, there exist practically efficient solvers for various constraints (for e.g. quadratic assignment model for matchings~\cite{loiola2007survey}). Finally, for the special cases of spanning trees, shortest paths, cardinality and partition matroid constraints, there exist LP relaxation based algorithms which achieve approximation factors of $O(\log n)$, $O(\sqrt{n\log l/\log\log l}$, $O(\log l/\log \log l)$ and $O(\log l/\log \log l)$ respectively. The approximation guarantees of MMin for unbounded $l$ is shown in Table~\ref{tab:1}.

  \arxiv{
We now prove Theorem~\ref{thm3}.
 \begin{proof}
 Assume we have an $\alpha$ approximation algorithm for solving problem~\eqref{mmin-subproblem}.
 We start MMin with $X^0 = \emptyset$. We prove the bound for MMin for the first iteration. Observe that  $m^{f_i}_{\emptyset}(X)$ approximate the submodular functions $f_i(X)$ up to a factor of $K(|X|, \kappa_i)$~\cite{curvaturemin}. If $\kappa_{wc}$ is the maximum curvature among the functions $f_i$, this means that $m^{f_i}_{\emptyset}(X)$ approximate the submodular functions $f_i(X)$ up to a factor of $K(|X|, \kappa_{wc})$ as well. Hence $\max_i m^{f_i}_{\emptyset}(X)$ approximates $\max_i f_i(X)$ with a factor of $K(|X|, \kappa_{wc})$. In other words, $\max_i f_i(X) \leq \max_i m^{f_i}_{\emptyset}(X) \leq K(|X|, \kappa_{wc}) \max_i f_i(X)$. Let $\hat{X_1}$ be the solution obtained by optimizing  $m^{f_i}_{\emptyset}$ (using an $\alpha$-approximation algorithms for the three cases described above). It holds that $\max_i m^{f_i}_{\emptyset}(\hat{X}_1) \leq \alpha \max_i m^{f_i}_{\emptyset}(X^m_1)$ where $X^m_1$ is the optimal solution of $\max_i m^{f_i}_{\emptyset}(X)$ over the constraint $\mathcal C$. Furthermore, denote $X^*$ as the optimal solution of $\max_i f_i(X)$ over $\mathcal C$. Then $\max_i m^{f_i}_{\emptyset}(X^m_1) \leq \max_i m^{f_i}_{\emptyset}(X^*) \leq K(|X^*|, \kappa_{wc})\max_i f_i(X^*)$. Combining both, we see that $\max_i f(\hat{X}_1) \leq \max_i m^{f_i}_{\emptyset}(\hat{X}_1) \leq \alpha K(|X^*|, \kappa_{wc})\max_i f_i(X^*)$. We then run MMin for more iterations and only continue if the objective value increases in the next round. Using the values of $\alpha$ for the different cases above, we get the results.
 \end{proof}
 }

\begin{table*}
\begin{center}
\begin{tabular}{ | c |  c |  }
\hline
 Constraints & $\hat{\mathcal P_{\mathcal C}}$  \\ \hline
 Matroids (includes Spanning Trees, Cardinality) & $\{x \in [0, 1]^n, x(S) \geq
r_{\mathcal M}(V) - r_{\mathcal M}(V \backslash S), \forall S
\subseteq V\}$ \\ \hline
 Set Covers (includes Vertex Covers and Edge Covers) & $\{x \in [0, 1]^{|\mathcal
  S|} \mid \sum_{i: u \in S_i} x(i) \geq c_u, \forall u \in \mathcal U\}$ \\ \hline
  s-t Paths & $\{x \in [0, 1]^{|\mathcal E|} \mid \sum_{e \in C}x(e) \geq 1$ , for every
s-t cut $C \subseteq \mathcal E\}$ \\ \hline
s-t Cuts & $\{x \in [0, 1]^{|\mathcal E|} \mid \sum_{e \in P}x(e) \geq 1$,
for every s-t path $P \subseteq \mathcal E\}$ \\ \hline
\end{tabular}
\end{center}
\caption{The Extended Polytope $\hat{\mathcal P_{\mathcal C}}$ for many of the combinatorial constraints discussed in this paper. See~\cite{cvxframework} for more details.}
\end{table*}

\subsection{Ellipsoidal Approximation Based Algorithm} 
Next, we use the Ellipsoidal Approximation to approximate the submodular function $f_i$. To account for the curvature of the individual functions $f_i$'s, we use the curve-normalized Ellipsoidal Approximation~\cite{curvaturemin}. We then obtain the functions $\hat{f}_i(X)$ which are of the form $(1 - \kappa_{f_i})\sqrt{w_{f_i}(X)} + \kappa_{f_i}\sum_{j \in X} f_i(j)$, and the problem is then to optimize $\max_i \hat{f}_i(X)$ subject to the constraints $\mathcal C$. This is no longer a min-max optimization problem. The following result shows that we can still achieve approximation guarantees in this case.
\begin{theorem}
For the case when $l$ is a constant, EA achieves an approximation guarantee of $O(K(\sqrt{|\mathcal V|\log |\mathcal V|}, \kappa_{wc}))$ for the knapsack, spanning trees, matching and s-t path problems. The complexity of this algorithm is exponential in $l$. When $l$ is unbounded, the EA algorithm achieves an approximation guarantee of $O(\sqrt{l}\sqrt{|\mathcal V|} \log |\mathcal V|)$ for all constraints. In the case of spanning trees, shortest paths, the EA achieves approximation factors of $O(\min(\sqrt{\log n}, \sqrt{l}) \sqrt{m}\log m)$, and $O(\min(n^{0.25}, \sqrt{l})\sqrt{m}\log m)$. Under cardinality and partition matroid constraints, EA achieves a $O(\sqrt{\log l/\log\log l} \sqrt{n}\log n)$ approximation.
\end{theorem}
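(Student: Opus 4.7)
The plan is to reduce Robust-SubMin to a min-max problem with modular (or low-degree polynomial) costs via the ellipsoidal approximation, and then combine existing robust min-max combinatorial algorithms with the approximation factor of the EA. The starting observation is that, applying the curve-normalized EA componentwise, each $f_i$ admits a surrogate $\hat f_i$ with $\hat f_i(X)\le f_i(X)\le O(K(\sqrt n\log n,\kappa_{f_i}))\hat f_i(X)$. Taking maxima and setting $\kappa_{wc}=\max_i\kappa_{f_i}$ gives
\[
\max_i \hat f_i(X)\le \max_i f_i(X)\le O(K(\sqrt n\log n,\kappa_{wc}))\max_i\hat f_i(X),
\]
so a $\beta$-approximate minimizer of $\max_i\hat f_i$ over $\mathcal C$ yields a $\beta\cdot O(K(\sqrt n\log n,\kappa_{wc}))$-approximation of the original problem, and it remains only to approximate $\min_{X\in\mathcal C}\max_i\hat f_i(X)$.

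For constant $l$, I would square the objective (monotone on non-negatives) to obtain a min-max polynomial program in the indicator $1_X$, and invoke the FPTAS of~\cite{aissi2010general} on knapsack, trees, matching, and s-t paths to obtain a $(1+\epsilon)$-approximate inner solution; absorbing $\epsilon$ yields the first claim. For unbounded $l$, I would drop the curvature normalization and use the pure EA $\hat f_i(X)=\sqrt{w_i(X)}$ (sufficient, since the claimed bounds carry no curvature factor) together with the higher-order $F_a$ approximation at $a=2$: the function $F_2(X)=\sqrt{\sum_i \hat f_i(X)^2}=\sqrt{\sum_i w_i(X)}$ is a $\sqrt l$-approximation of $\max_i\hat f_i$, and minimizing it reduces to the plain modular problem $\min_{X\in\mathcal C}\sum_i w_i(X)$, solvable exactly (or within the constraint-specific factor $\beta$) for every constraint in the table. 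Multiplying by the EA factor gives the general $O(\sqrt l\sqrt n\log n)$ bound.

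For the sharpened bounds on trees, paths, and cardinality/partition matroid I would instead apply a specialized min-max modular solver directly to the $w_i$'s: $O(\log n)$ for trees~\cite{kasperski2011approximability}, $\tilde O(\sqrt n)$ for paths~\cite{kasperski2018approximating}, and $O(\log l/\log\log l)$ for cardinality and matroid~\cite{anari2019robust}. The essential observation is that if $\max_i w_i(\tilde X)\le\alpha\max_i w_i(X^*)$ then $\max_i\sqrt{w_i(\tilde X)}\le\sqrt\alpha\max_i\sqrt{w_i(X^*)}$, so the outer $\sqrt{\cdot}$ from EA replaces the modular approximation factor by its square root, producing exactly the $\sqrt{\log n}$, $n^{1/4}$, and $\sqrt{\log l/\log\log l}$ factors claimed.

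The main technical nuisance I expect is the curve-normalized case at constant $l$: the cross term arising from squaring $(1-\kappa)\sqrt{w_i(X)}+\kappa\, m_i(X)$ is not quadratic in $1_X$, so one must either split the objective into its two additive pieces (losing only a constant factor) and apply the FPTAS to each, or dominate the cross term by AM-GM before invoking the FPTAS, and then verify applicability of the chosen FPTAS for each of the four constraint families listed.
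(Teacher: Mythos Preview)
Your treatment of the unbounded-$l$ case is essentially the paper's argument: the paper also drops curvature normalization, reduces to the linear min-max problem $\min_{X\in\mathcal C}\max_i w_{f_i}(X)$, observes that an $\alpha$-approximation there becomes a $\sqrt\alpha$-approximation for $\max_i\sqrt{w_{f_i}(X)}$, and then plugs in the generic $l$-bound or the constraint-specific $O(\log n)$, $\tilde O(\sqrt n)$, $O(\log l/\log\log l)$ factors. Your $F_2$ shortcut for the generic $\sqrt l$ bound---reducing directly to the single modular sum $\sum_i w_{f_i}$ rather than to the min-max of the $w_{f_i}$'s---is a pleasant variant, but the effect is identical.

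The genuine gap is in the constant-$l$ case. The FPTAS of Aissi et al.\ is for \emph{linear} min-max problems, not polynomial ones, so ``square and invoke \cite{aissi2010general}'' is not a valid step as written. You correctly flag that squaring the curve-normalized surrogate $(1-\kappa_i)\sqrt{w_i(X)}+\kappa_i m_i(X)$ produces a cross term $\sqrt{w_i(X)}\,m_i(X)$ that is not polynomial in $1_X$, and neither of your proposed fixes closes the argument: splitting into two pieces and ``applying the FPTAS to each'' yields two unrelated minimizers rather than one feasible $X$, and AM-GM on the cross term still leaves a quadratic-plus-linear min-max to which Aissi et al.\ does not apply. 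The paper avoids all of this by not squaring: it writes the surrogate objective as $h\bigl(w^1_1(X),w^1_2(X),\dots,w^l_1(X),w^l_2(X)\bigr)$ with $h(\mathbf y)=\max_i(\sqrt{y^i_1}+y^i_2)$, checks that $h$ is coordinatewise monotone and satisfies $h(\lambda\mathbf y)\le\lambda\,h(\mathbf y)$ for $\lambda\ge1$, and then invokes the multi-objective FPTAS framework of \cite{mittal2013general,papadimitriou2000approximability}, which applies whenever such an $h$ is composed with finitely many linear objectives and the underlying exact-value problem on $\mathcal C$ is solvable (as it is for knapsack, trees, matchings, and s-t paths). That framework is the missing ingredient in your constant-$l$ argument.
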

For the case when $l$ is bounded, we reduce the optimization problem after the Ellipsoidal Approximation into a multi-objective optimization problem, which provides an FPTAS for knapsack, spanning trees, matching and s-t path problems~\cite{papadimitriou2000approximability,mittal2013general}. When $l$ is unbounded, we further reduce the EA approximation objective into a linear objective which then provides the approximation guarantees similar to MMin above. However, as a result, we loose the curvature based guarantee. \narxiv{A detailed proof is in the extended version of this paper. }
\arxiv{
\begin{proof}
First we start with the case when $l$ is a constant. Observe that the optimization problem is 
\begin{align*}
\min_{X \in \mathcal C} \max_i \hat{f}_i(X) = \min_{X \in \mathcal C} \max_i (1 - \kappa_{f_i})\sqrt{w_{f_i}(X)} + \\ \kappa_{f_i}\sum_{j \in X} f_i(j)
\end{align*}
This is of the form $\min_{X \in \mathcal C} \max_i \sqrt{w^i_1(X)} + w^i_2(X)$. Define $h(y^1_1, y^1_2, y^2_2, y^2_2, \cdots, y^l_1, y^l_2) = \max_i \sqrt{y^i_1} + y^i_2$. Note that the optimization problem is $\min_{X \in \mathcal C} h(w^1_1(X), w^1_2(X), \cdots, w^l_1(X), w^l_2(X))$. Observe that $h(\mathbf{y}) \leq h(\mathbf{y^{\prime}})$ if $\mathbf{y} \leq \mathbf{y^{\prime}}$. Furthermore, note that $\mathbf{y} \geq 0$. Then given a $\lambda > 1$, $h(\lambda \mathbf{y}) = \max_i \sqrt{\lambda y^i_1} + \lambda y^i_2 \leq \lambda \sqrt{y^i_1} + \lambda y^i_2 \leq \lambda h(\mathbf{y})$. As a result, we can use Theorem 3.3 from~\cite{mittal2013general} which provides an FPTAS as long as the following exact problem can be solved on $\mathcal C$: Given a constant $C$ and a vector $c \in \mathbf{R}^n$, does there exist a $x$ such that $\langle c, x \rangle = C$? A number of constraints including matchings, knapsacks, s-t paths and spanning trees satisfy this~\cite{papadimitriou2000approximability}. For these constraints, we can obtain a $1 + \epsilon$ approximation algorithm in complexity exponential in $l$. 

When $l$ is unbounded, we directly use the Ellipsoidal Approximation and the problem then is to optimize $\min_{X \in \mathcal C} \max_i \sqrt{w_{f_i}(X)}$. We then transform this to the following optimization problem: $\min_{X \in \mathcal C} \max_i w_{f_i}(X)$. Assume we can obtain an $\alpha$ approximation to the problem $\min_{X \in \mathcal C} \max_i w_{f_i}(X)$. This means we can achieve a solution $\hat{X}$ such that $\max_i w_{f_i}(\hat{X}) \leq \alpha \max_i w_{f_i}(X^{ea})$ where $X^{ea}$ is the optimal solution for the problem $\min_{X \in \mathcal C} \max_i w_{f_i}(X)$. Then observe that $\max_i \sqrt{w_{f_i}(X^{ea})} \leq \max_i \sqrt{w_{f_i}(X^*)} \leq \max_i f_i(X^*)$. Combining all the inequalities and also using the bound of the Ellipsoidal Approximation, we have $\max_i f_i(\hat{X}) \leq \beta \max_i \sqrt{w_{f_i}(\hat{X})} \leq \beta \sqrt{\alpha} \max_i \sqrt{w_{f_i}(X^{ea})} \leq \beta \sqrt{\alpha} \max_i \sqrt{w_{f_i}(X^*)} \leq \beta \sqrt{\alpha} \max_i f_i(X^*)$ where $\beta$ is the approximation of the Ellipsoidal Approximation.

We now use this result to prove the theorem. Consider two cases. First, we optimize the \emph{avg} and \emph{max} versions of $\max_i w_{f_i}(X)$ which provide $\alpha = l$ approximation. Secondly, for the special cases of spanning trees, shortest paths, cardinality and partition matroid constraints, there exist LP relaxation based algorithms which achieve approximation factors $\alpha$ being $O(\log n)$, $O(\sqrt{n\log l/\log\log l}$, $O(\log l/\log \log l)$ and $O(\log l/\log \log l)$ respectively. Substitute these values of $\alpha$ and using the fact that $\beta = O(\sqrt{|V|}\log |V|)$, we get the approximation bound.
\end{proof}
}

\begin{figure*}
    \centering
    \includegraphics[width = 0.30\textwidth]{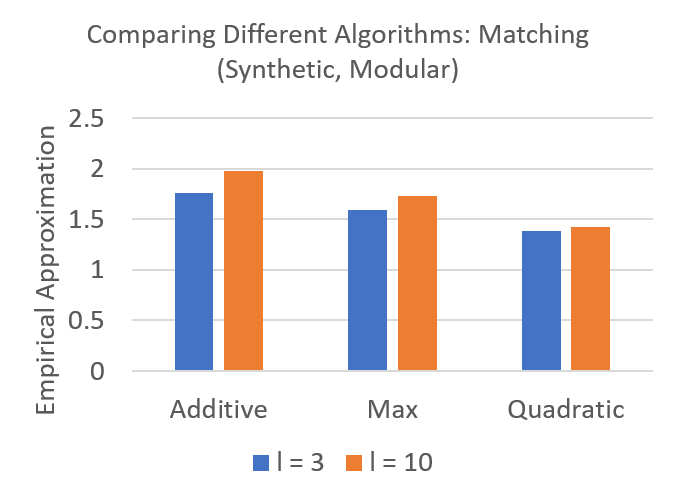} 
    \includegraphics[width = 0.32\textwidth]{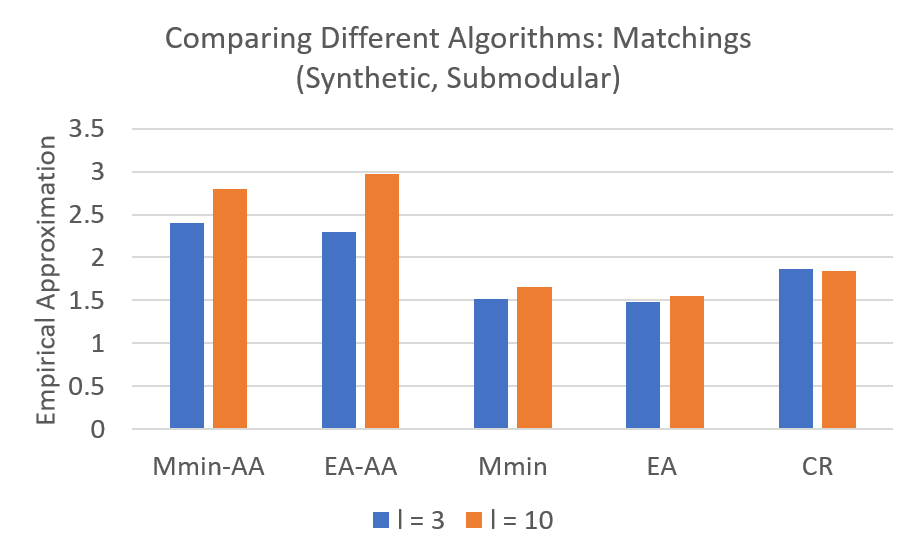}   
    \includegraphics[width = 0.32\textwidth]{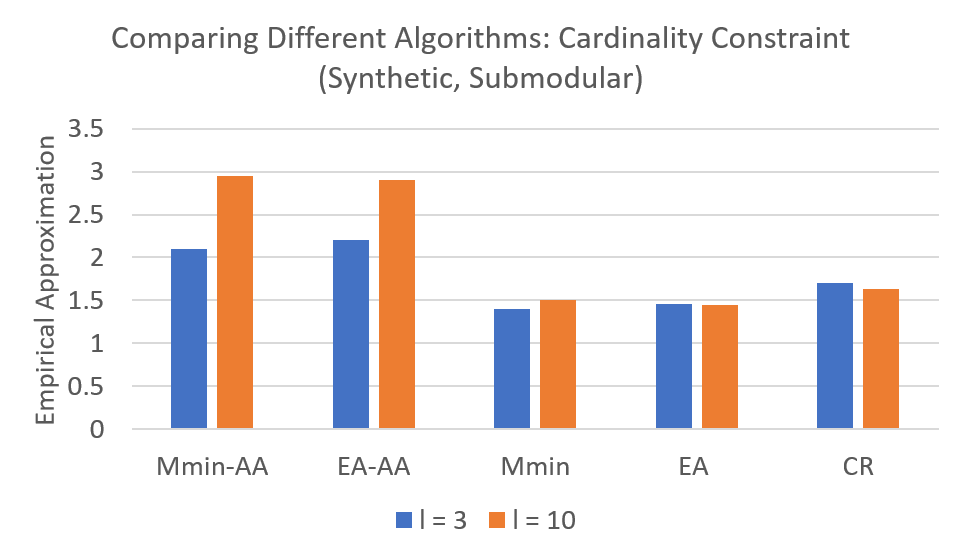}
        \includegraphics[width = 0.35\textwidth]{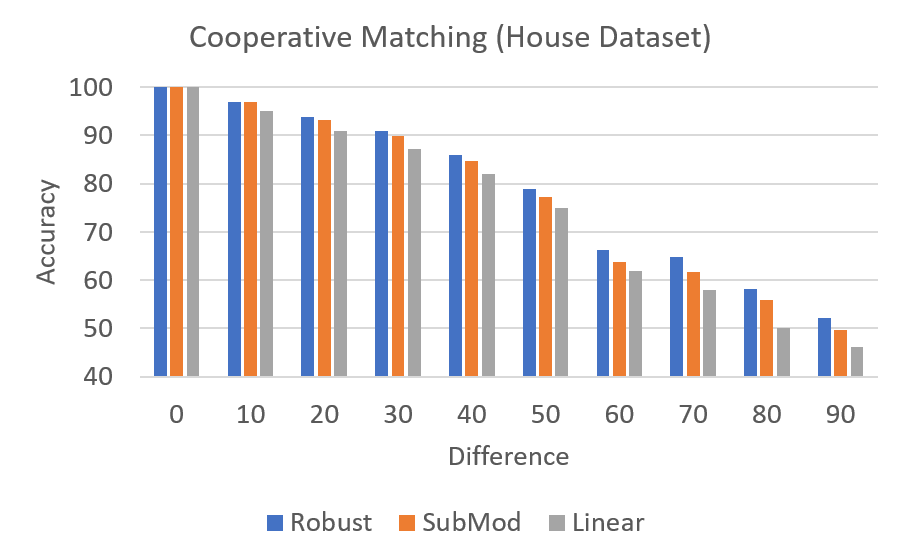}
        \includegraphics[width = 0.35\textwidth]{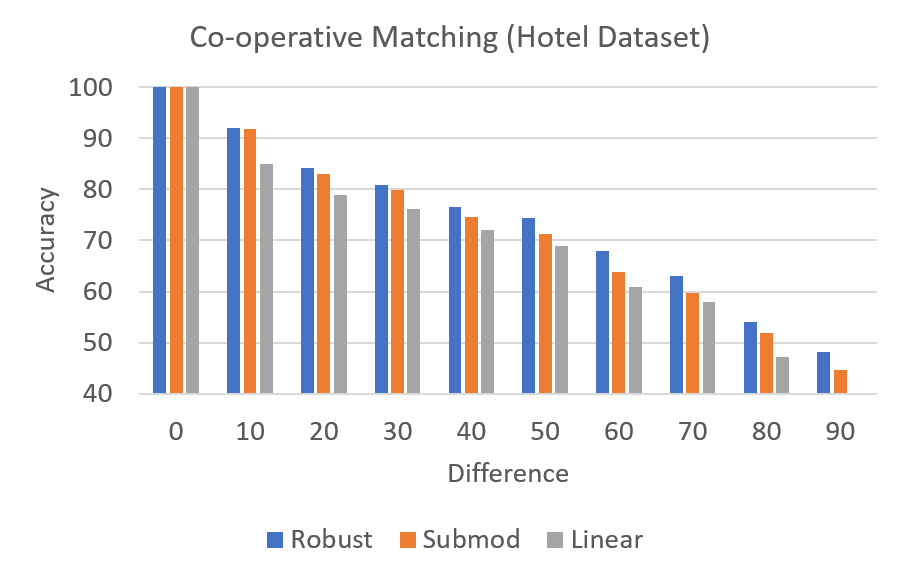}
    \caption{Top Row: Synthetic experiments with left: matching and modular, center: matching and submodular and right: cardinality and submodular. Bottom Row:  Co-operative Matching with left: House and right: Hotel datasets.}
    \label{fig:my_label}
\end{figure*}

\subsection{Continuous Relaxation Algorithm}
Here, we use the continuous relaxation of a submodular function. In particular, we use the relaxation $\max_i \hat{f_i}(x), x \in [0, 1]^{|\mathcal V|}$ as the continuous relaxation of the original function $\max_i f_i(X)$ (here $\hat{f}$ is the \lovasz{} extension). Its easy to see that this is a continuous relaxation. Since the \lovasz{} extension is convex, the function $\max_i \hat{f_i}(x)$ is also a convex function. This means that we can exactly optimize the continuous relaxation over a convex polytope. The remaining question is about the rounding and the resulting approximation guarantee due to the rounding. Given a constraint $\mathcal C$, define the up-monotone polytope similar to~\cite{iyer2014monotone} $\hat{\mathcal P_{\mathcal C}} = \mathcal P_{\mathcal C} + [0,1]^{|\mathcal V|}$. We then use the observation from~\cite{iyer2014monotone} that all the constraints considered in Table 1 can be expressed as:
\begin{align}
\hat{\mathcal P_{\mathcal C}} = \{x \in [0,1]^n | \sum_{i \in W} x_i \geq b_W \mbox{, for all $W \in \mathcal W$}\}. 
\end{align}

We then round the solution using the following rounding scheme. Given a continuous vector $\hat{x}$ (which is the optimizer of $\max_i \hat{f_i}(x), x \in \hat{\mathcal P_{\mathcal C}}$, order the elements based on $\sigma_{\hat{x}}$. Denote $X_i = [\sigma_{\hat{x}}[1], \cdots, \sigma_{\hat{x}}[i]]$ so we obtain a chain of sets $\emptyset \subseteq X_1 \subseteq X_2 \subseteq \cdots \subset X_n$. Our rounding scheme picks the smallest $k$ such that $X_k \in \hat{\mathcal C}$. Another way of checking this is if there exists a set $X \subset X_k$ such that $X \in \mathcal C$. Since $\hat{\mathcal C}$ is up-monotone, such a set must exist.  The following result shows the approximation guarantee.
\begin{theorem}
Given submodular functions $f_i$ and constraints $\mathcal C$ which can be expressed as $\{x \in [0,1]^n | \sum_{i \in W} x_i \geq b_W$ for all $W \in \mathcal W$\} for a family of sets $\mathcal W = \{W_1, \cdots\}$, the continuous relaxation scheme achieves an approximation guarantee of $\max_{W \in \mathcal W} |W| - b_W + 1$. If we assume the sets in $\mathcal W$ are disjoint, the integrality bounds matches the approximation bounds. 
\end{theorem}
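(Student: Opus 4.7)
The plan is to use threshold rounding combined with the integral representation of the \lovasz{} extension, so that the approximation ratio reduces to lower bounding the largest threshold at which the rounding succeeds. Let $\hat{x}$ optimize $\max_i \hat{f}_i(x)$ over $\hat{\mathcal P_{\mathcal C}}$ and write $X(\theta) = \{j : \hat{x}_j \geq \theta\}$. The paper's rule --- pick the smallest $k$ with $X_k \in \hat{\mathcal C}$ --- is the same as picking the largest $\theta^\ast$ for which $X(\theta^\ast)$ is feasible, and then $X_k = X(\theta^\ast)$. Since the final output $X \subseteq X_k$ is feasible and each $f_i$ is monotone, it suffices to bound $\max_i f_i(X_k)$. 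Using the identity $\hat{f}_i(\hat{x}) = \int_0^1 f_i(X(\theta))\, d\theta$ together with monotonicity (so $f_i(X(\theta)) \geq f_i(X(\theta^\ast))$ for $\theta \leq \theta^\ast$) gives $\hat{f}_i(\hat{x}) \geq \theta^\ast f_i(X_k)$ for every $i$; combining this with $\max_i \hat{f}_i(\hat{x}) \leq \max_i \hat{f}_i(\mathbf{1}_{X^\ast}) = \max_i f_i(X^\ast)$ reduces the theorem to showing $\theta^\ast \geq 1/\max_{W \in \mathcal W}(|W| - b_W + 1)$.

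For that lower bound I would exploit the minimality of $X_k$. Because removing the element $\sigma_{\hat{x}}(k)$ destroys feasibility, some $W^\ast \in \mathcal W$ must contain $\sigma_{\hat{x}}(k)$ with $|X_k \cap W^\ast| = b_{W^\ast}$. Splitting the LP inequality $b_{W^\ast} \leq \sum_{i \in W^\ast} \hat{x}_i$ into the piece over $W^\ast \cap X_k$ (bounding $b_{W^\ast} - 1$ of those entries by $1$ and the entry at $\sigma_{\hat{x}}(k)$ by $\theta^\ast$) and the piece over $W^\ast \setminus X_k$ (each of the $|W^\ast| - b_{W^\ast}$ entries is at most $\theta^\ast$) yields
\begin{align*}
b_{W^\ast} \leq (b_{W^\ast} - 1) + (|W^\ast| - b_{W^\ast} + 1)\,\theta^\ast,
\end{align*}
i.e.\ $\theta^\ast \geq 1/(|W^\ast| - b_{W^\ast} + 1)$. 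I expect this counting step --- extracting exactly the factor $|W| - b_W + 1$ from the interaction of $\hat{x} \leq 1$ with the covering constraint --- to be the main obstacle, and it is the only place where the explicit structure of $\hat{\mathcal P_{\mathcal C}}$ is really used.

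For the integrality-gap claim under disjoint $\mathcal W$ I would exhibit a tight single-function instance ($l = 1$). Fix $W^\ast$ achieving $\max_W(|W| - b_W + 1)$, label $W^\ast = \{1, \ldots, w\}$ with $b_{W^\ast} = b$, and set $U = \{b, b+1, \ldots, w\}$. The coverage function $f(X) = \mathbf{1}[X \cap U \neq \emptyset]$ is monotone submodular with \lovasz{} extension $\hat{f}(x) = \max_{i \in U} x_i$. Any integer feasible $X$ satisfies $|X \cap W^\ast| \geq b > |W^\ast \setminus U|$ and therefore intersects $U$, so $f(X) = 1$. For the LP I would put $\hat{x}_i = 1$ on $W^\ast \setminus U$ and $\hat{x}_i = 1/(w - b + 1)$ on $U$; disjointness of $\mathcal W$ ensures each remaining constraint $\sum_{i \in W} x_i \geq b_W$ can be satisfied independently on its own coordinate block and contributes nothing to $\hat{f}$. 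This yields $\hat{f}(\hat{x}) = 1/(w - b + 1)$ while every integer optimum has value $1$, producing the integrality gap $w - b + 1 = \max_{W \in \mathcal W}(|W| - b_W + 1)$ claimed.
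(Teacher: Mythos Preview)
Your argument is correct and follows essentially the same route as the paper. Both proofs threshold-round the continuous optimizer and reduce the approximation ratio to a lower bound on the threshold $\theta^\ast$; you obtain $\hat{f}_i(\hat{x}) \geq \theta^\ast f_i(X_k)$ from the integral representation of the \lovasz{} extension, while the paper uses the equivalent one-line argument $\theta^\ast \mathbf{1}_{X(\theta^\ast)} \leq \hat{x}$ together with positive homogeneity and monotonicity of $\hat{f}_i$. For the bound $\theta^\ast \geq 1/(|W^\ast|-b_{W^\ast}+1)$, the paper phrases the same counting (the $b_W$-th largest coordinate in $W$ is at least $1/(|W|-b_W+1)$) as a ``worst-case scenario'' without explicitly invoking minimality of $X_k$; your version pinpoints a tight $W^\ast$ and is arguably cleaner. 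Finally, the paper only asserts the integrality-gap statement, whereas you supply an explicit single-function instance; that construction is sound and is a genuine addition over what the paper proves.
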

\begin{proof}
The proof of this theorem is closely in line with Lemma 2 and Theorem 1 from~\cite{iyer2014monotone}. We first show the following result. Given monotone submodular functions $f_i, i \in 1, \cdots, l$, and an optimizer $\hat{x}$ of $\max_i \hat{f}(x)$, define $\hat{X_{\theta}} = \{i: \hat{x}_i \geq \theta\}$. We choose $\theta$ such that $\hat{X_{\theta}} \in \mathcal C$. Then  $\max_i f_i(\hat{X_{\theta}}) \leq 1/\theta \max_i f_i(X^*)$ where $X^*$ is the optimizer of $\min_{X \in \mathcal C} \max_i f_i(X)$. To prove this, observe that, by definition $\theta 1_{\hat{X_{\theta}}} \leq \hat{x}$\footnote{$1_A$ is the indicator vector of set $A$ such that $1_A[i] = 1$ iff $i\in A$.}. As a result, $\forall i, \hat{f_i}(\theta 1_{\hat{X_{\theta}}}) = \theta f_i(\hat{X_{\theta}}) \leq \hat{f_i}(\hat{x})$ (this follows because of the positive homogeneity of the \lovasz{} extension. This implies that $\theta \max_i f_i(\hat{X_{\theta}}) \leq \hat{f_i}(\hat{x}) \leq \min_{x \in \mathcal P_{\mathcal C}} \hat{f_i}(\hat{x}) \leq \min_{X \in \mathcal C} \max_i f_i(X)$. The last inequality holds from the fact that the discrete solution is greater than the continuous one since the continuous one is a relaxation. This proves this part of the theorem.

Next, we show that the approximation guarantee holds for the class of constraints defined as $\{x \in [0,1]^n | \sum_{i \in W} x_i \geq b_W\}$. 

From these constraints, note that for every $W \in \mathcal W$, at least $b_W \leq |W|$ elements need to ``covered''.  Consequently, to round a vector $x \in \hat{\mathcal P_{\mathcal C}}$, it is sufficient to choose $\theta = \min_{W \in \mathcal W} x_{[b_W, W]}$ as the rounding threshold, where $x_{[k, A]}$ denotes the $k^{\mbox{th}}$ largest entry of $x$ in a set $A$. 
The worst case scenario is that the $b_W - 1$ entries of $x$ with indices in the set $W$ are all $1$, and the remaining mass of $1$ is equally distributed over the remaining elements in $W$. In this case, the value of $x_{[b_W, W]}$ is $1/(|W| - b_W + 1)$. 
Since the constraint requires $\sum_{i \in W} x_i \geq b_W$, it must hold that $x_{[b_W, W]} \geq 1/(|W| - b_W + 1)$. Combining this with the first part of the result proves this theorem.
\end{proof}

We can then obtain the approximation guarantees for different constraints including cardinality, spanning trees, matroids, set covers, edge covers and vertex covers, matchings, cuts and paths by appropriately defining the polytopes $\mathcal P_{\mathcal C}$ and appropriately setting the values of $\mathcal W$ and $\max_{W \in \mathcal W} |W| - b_W + 1$. Combining this with the appropriate definitions (shown in Table 3), we get the approximation bounds in Table 1.

\subsection{Analysis of the Bounds for Various Constraints} 
Given the bounds in Tables 1 and 2, we discuss the tightness of these bounds viz-a-via the hardness. In the case when $l$ is a constant, MMin achieves tight bounds for Trees, Matchings and Paths while the EA achieves tight bounds up to $\log$ factors for knapsack constraints. In the case when $l$ is not a constant, MMin achieves a tight bound up to $\log$ factors for spanning tree constraints. The continuous relaxation scheme obtains tight bounds in the case of vertex covers. In the case when the functions $f_i$ have curvature $\kappa = 1$ CR also obtains tight bounds for edge-covers and matchings. We also point out that the bounds of average approximation (AA) depend on the average case curvature as opposed to the worst case curvature. However, in practice, the functions $f_i$ often belong to the same class of functions in which case all the functions $f_i$ have the same curvature.

\section{Experimental Results}
\subsection{Synthetic Experiments} 
The first set of experiments are synthetic experiments. We define $f_j(X) = \sum_{i = 1}^{|\mathcal C_j|} \sqrt{w(X \cap C_{ij})}$ for a clustering $\mathcal C_j = \{C_1, C_2, \cdots, C_{|\mathcal C_j|}$. We define $l$ different random clusterings (with $l = 3$ and $l = 10$). We choose the vector $w$ at random with $w \in [0, 1]^n$. We compare the different algorithms under cardinality constraints $(|X| \leq 10)$ and matching constraints. For cardinality constraints, we set $n = 50$. For matchings, we define a fully connected bi-partite graph with $m = 7$ nodes on each side and correspondingly $n = 49$. The results are over 20 runs of random choices in $w$ and $\mathcal C$'s and shown in Figure 1 (top row). The first plot compares the different algorithms with a modular function (the basic min-max combinatorial problem). The second and third plot on the top row compare the different algorithms with a submodular objective function defined above. In the submodular setting, we compare the average approximation baselines (MMin-AA, EA-AA), Majorization-Minimization (MMin), Ellipsoidal Approximation (EA) and Continuous Relaxation (CR). For the Modular cases, we compare the simple additive approximation of the worst case function, the Max-approximation and the quadratic approximation.  We use the graduated assignment algorithm~\cite{gold1996graduated} for the quadratic approximation to solve the quadratic assignment problem. For other constraints, we can use the efficient algorithms from~\cite{buchheim2018quadratic}

Figure 2 (left) shows the results in the modular setting.  first that as expected that int he modular setting, the simple additive approximation of the max function doesn't perform well and the quadratic approximation approach performs the best. Figure 2 center and right show the results with the submodular function under matching and cardinality constraint. Since the quadratic approximation performs the best, we use this in the MMin and EA algorithms. First we see that the average approximations (MMin-AA and EA-AA) don't perform well since it optimizes the average case instead of the worst case. Directly optimizing the worst case performs much better. Next, we observe that MMin performs comparably to EA though its a simpler algorithm (a fact which has been noticed in several other scenarios as well~\cite{rkiyersemiframework2013,nipssubcons2013,jegelka2011-inference-gen-graph-cuts}

\subsection{Co-operative Matchings} 
In this set of experiments, we compare \textsc{Robust-SubMin} in co-operative matchings. We follow the experimental setup in~\cite{iyer2019near}. We run this on the House and Hotel Datasets~\cite{caetano2009learning}. The results in Figure 2 (bottom row). The baselines are a simple modular (additive) baseline where the image correspondence problem becomes an assignment problem and the co-operative matching model from~\cite{iyer2019near} which uses a single clustering. For the robust model, we use the class of functions~\cite{iyer2019near}, except with multiple clusterings instead of one. Note that these clusterings are over the pixels in the two images, which then induce a clustering on the set of edges. In particular, we construct $l$ clusterings of the pixels: $\{(\mathcal E^1_1, \cdots, \mathcal E^1_k), \cdots, (\mathcal E^l_1, \cdots, \mathcal E^l_k)\}$ and define a robust objective: $f_{robust}(S) = \max_{i = 1:l} \sum_{j = 1}^k \psi_j(w(S \cap \mathcal E^i_j)), S \subseteq \mathcal E$. We run our experiments with $l = 10$ different clustering each obtained by different random initializations of k-means. From the results, we see that the robust technique consistently outperforms a single submodular function. In this experiment, we consider all pairs of images with the difference in image numbers from 1 to 90 (The $x$-axis in Figure 2 (bottom row) -- this is similar to the setting in~\cite{iyer2019near}).

\section{Conclusions}
In this paper, we study Robust Submodular Minimization from a theoretical and empirical perspective. We study approximation algorithms and hardness results. We propose a scalable family of algorithms (including the majorization-minimization algorithm) and theoretically and empirically contrast their performance. In future work, we would like to address the gap between the hardness and approximation bounds, and achieve tight curvature-based bounds in each case. We would also like to study other settings and formulations of robust optimization in future work.\looseness-1

\bibliography{ecai}
\end{document}